\def\paperTitle{Forecasting Continuous Non-Conservative Dynamical Systems in SO(3)}
\def\authorBlock{
    Lennart Bastian\textsuperscript{1,2*} \qquad
    Mohammad Rashed\textsuperscript{1,2*} \qquad
    Nassir Navab\textsuperscript{1,2} \qquad
    Tolga Birdal\textsuperscript{3} \\

    \\
    \textsuperscript{1} Technical University of Munich \;
    \textsuperscript{2} Munich Center of Machine Learning \;
    \textsuperscript{3} Imperial College London
}
\DeclareRobustCommand\onedot{\futurelet\@let@token\@onedot}
\def\@onedot{\ifx\@let@token.\else.\null\fi\xspace}
\newcommand{\SO}{\ensuremath{\mathrm{SO}(3)}}
\newcommand{\so}{\ensuremath{\mathfrak{so}(3)}}
\newcommand{\SE}{\ensuremath{\mathrm{SE}(3)}}
\newcommand{\R}{\mathbb{R}}
\newcommand{\bR}{\mathbf{R}}
\newcommand{\bI}{\mathbf{I}}
\newcommand{\bJ}{\mathbf{J}}
\newcommand{\bomega}{\boldsymbol{\omega}}
\newcommand{\bA}{\mathbf{A}}
\newcommand{\bb}{\mathbf{b}}
\newcommand{\bD}{\mathbf{D}}
\newcommand{\bx}{\mathbf{x}}
\newcommand{\bxtilde}{\mathbf{\tilde{x}}}
\newcommand{\bW}{\mathbf{W}}
\newcommand{\hatop}[1]{\hat{#1}}
\newcommand{\phit}{\varphi(t)}
\newcommand{\bp}{\mathbf{p}}
\newcommand{\ftheta}{f_\theta}
\newcommand{\gtheta}{g_{\theta'}}
\newcommand{\zetatheta}{\zeta_\theta}
\newcommand{\expmap}{\mathrm{Exp}}
\newcommand{\logmap}{\mathrm{Log}}
\newcommand{\tk}{t_k}
\newcommand{\thetaparam}{\theta}
\newcommand{\brho}{\boldsymbol{\rho}}
\newcommand{\brhok}{\boldsymbol{\rho}_k}
\newcommand{\bz}{\mathbf{z}}
\newcommand{\Xpath}{X}
\newcommand{\dX}{\mathrm{d}X}
\newcommand{\ddX}{\mathrm{d}^2X}
\newcommand{\Nsamp}{N}
\newcommand{\bxtildek}{\mathbf{\tilde{x}}_k}
\declaretheoremstyle[
  spaceabove=6pt,
  spacebelow=6pt,
  headfont=\normalfont\bfseries,
  notefont=\mdseries,
  notebraces={(}{)},
  bodyfont=\normalfont\itshape,
  postheadspace=1em
]{mytheoremstyle}
\declaretheorem[style=mytheoremstyle,name=Theorem]{theorem}
\declaretheorem[style=mytheoremstyle,name=Proposition,numberlike=theorem]{proposition}
\declaretheorem[style=mytheoremstyle,name=Definition,numberlike=theorem]{definition}
\declaretheorem[style=mytheoremstyle,name=Lemma,numberlike=theorem]{lemma}
\declaretheorem[style=mytheoremstyle,name=Proposition,numbered=no]{proposition*}
    \crefname{theorem}{Thm.}{Thms.}
    \crefname{proposition}{Prop.}{Props.}
    \crefname{definition}{Def.}{Defs.}
    \crefname{lemma}{Lem.}{Lems.}
\renewcommand{\paragraph}[1]{{\vspace{0.6mm}\noindent \bf #1}.}
\newcommand{\App}{App.~}
\definecolor{iccvblue}{rgb}{0.21,0.49,0.74}
\crefname{section}{Sec.}{Secs.}
\Crefname{section}{Section}{Sections}
\Crefname{table}{Table}{Tables}
\crefname{table}{Tab.}{Tabs.}
\begin{document}
\title{\paperTitle}
\author{\authorBlock}

\maketitle
{\let\thefootnote\relax\footnote{{
\vspace{-0.2cm}
\parbox{\linewidth}{%
$^*$Equal Contribution\\
\raisebox{-0.2ex}{\Letter} \ : {\tt \scriptsize lennart.bastian (at) tum.de}
}}}}

\begin{abstract}
Modeling the rotation of moving objects is a fundamental task in computer vision, yet $SO(3)$ extrapolation still presents numerous challenges: (1) unknown quantities such as the moment of inertia complicate dynamics, (2) the presence of external forces and torques can lead to non-conservative kinematics, and (3) estimating evolving state trajectories under sparse, noisy observations requires robustness.
We propose modeling trajectories of noisy pose estimates on the manifold of 3D rotations in a physically and geometrically meaningful way by leveraging Neural Controlled Differential Equations guided with $SO(3)$ Savitzky-Golay paths.
Existing extrapolation methods often rely on energy conservation or constant velocity assumptions, limiting their applicability in real-world scenarios involving non-conservative forces. 
In contrast, our approach is agnostic to energy and momentum conservation while being robust to input noise, making it applicable to complex, non-inertial systems. 
Our approach is easily integrated as a module in existing pipelines and generalizes well to trajectories with unknown physical parameters.
By learning to approximate object dynamics from noisy states during training, our model attains robust extrapolation capabilities in simulation and various real-world settings.
Code is available \href{https://github.com/bastianlb/forecasting-rotational-dynamics}{here}.
\end{abstract}

\vspace{-5mm}
\section{Introduction}

Modeling the dynamics of rigid bodies in 3D is fundamental to numerous domains such as tracking, robotic control, trajectory planning, and sensor fusion~\cite{fan2025benchmarks,duong2021hamiltonian,mason2023learning,cho2014multi}.
While the mathematical foundations of rigid body mechanics are well understood, many modern applications require predicting rotational motion from noisy sensor data when key physical parameters are unknown. 
This presents significant challenges at the intersection of data-driven and physical modeling.

The core difficulty stems from the inherent nonlinearity of rotational motion in \SO, where even simple rigid bodies can exhibit complex behaviors ranging from regular precession\footnote{consider a spinning top balancing due to the gyroscopic effect.} to chaos~\cite{borisov2018rigid}.
Unlike translational motion in Euclidean space, rotational dynamics are governed by the moment of inertia tensor - a quantity that couples the body's mass distribution with its angular momentum in non-trivial ways. 
When this inertial property is unknown, direct estimation becomes an ill-posed inverse problem, particularly in the presence of external torques and measurement noise~\cite{mason2023learning,park2008kinematic,gugushvili2012consistent}.
This problem is further complicated by the system's sensitivity to initial conditions - a characteristic of chaotic behavior that emerges in many rotating rigid body systems~\cite{borisov2018rigid}.

\begin{figure}[t!]
    \includegraphics[width=\columnwidth]{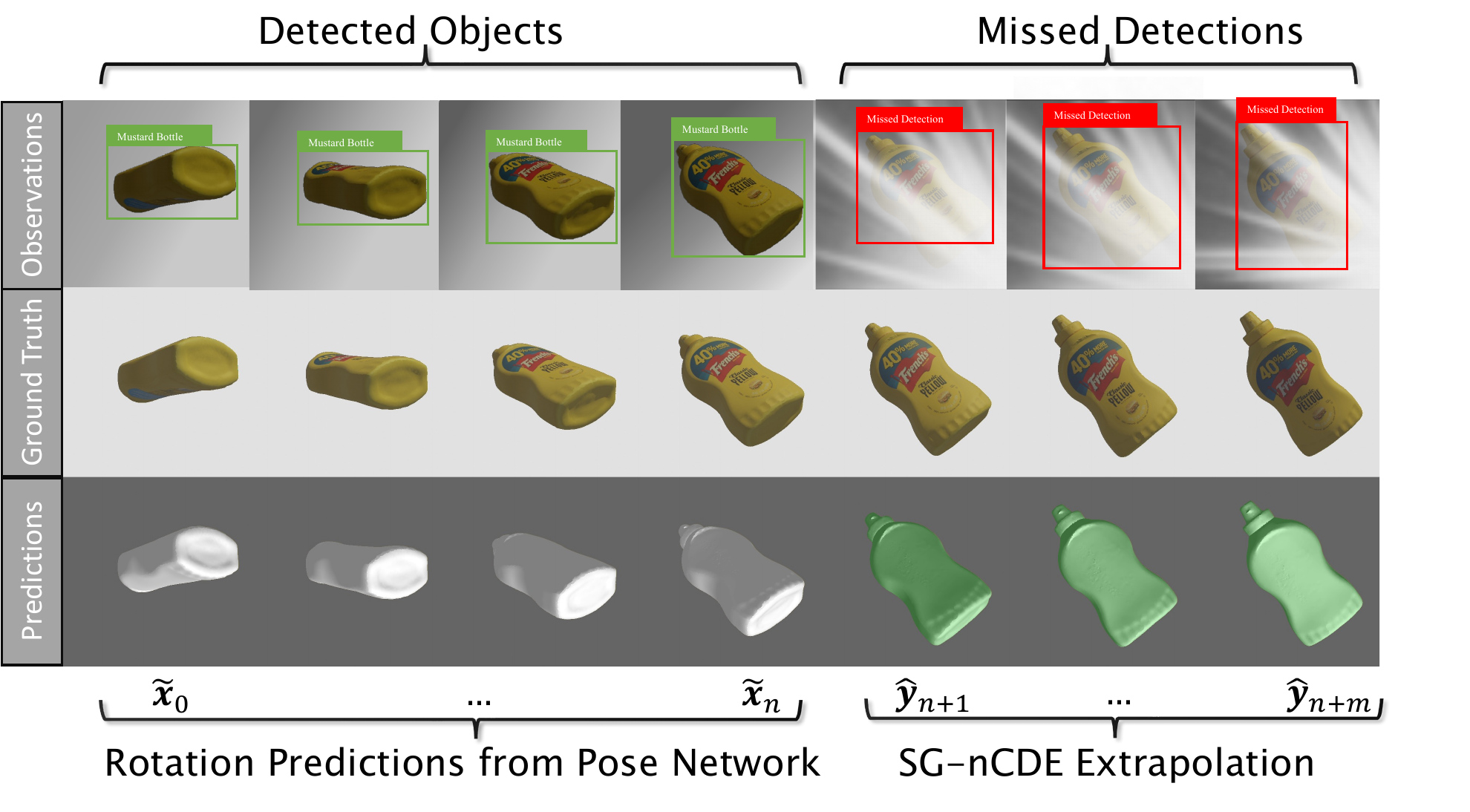}
    \vspace{-6mm}
    \caption{We propose Savitzky-Golay Neural Controlled Differential Equations (SG-nCDEs) for \SO~forecasting, a fundamental task at the intersection of tracking and object pose estimation.
    Real-world rotational observations can be occluded or otherwise errant.
    Above, pose predictions (bottom left) are input into an SG-nCDE, which robustly compensates for missed detections of a rotational object trajectory (green predictions, bottom right).}
    \label{fig:object_pose}
    \vspace{-5.0mm}
\end{figure}

This work proposes simultaneously addressing \SO~state estimation and learning unknown physical parameters from noisy measurements.
In contrast to previous approaches that restrict themselves to conservative systems, our method can predict rotational object trajectories in various physical scenarios, such as dissipative forces, or for objects under the influence of an external conservative force field.
This versatility opens the door to a variety of applications, from improving the situational awareness of autonomous systems \cite{ettinger2021large,mason2023learning}, compensating for missing sensor measurements in tracking and localization~\cite{fan2025benchmarks,bosse2009continuous}, to dynamic grasping~\cite{marturi2019dynamic,wu2022grasparl}.

However, forecasting rotation presents several challenges: learned methods neglecting the geometrical nuances of \SO~typically struggle due to numerical errors or non-linearities~\cite{geist2024learning,zhou2019continuity,bregier2021deepregression,chen2022projective}.
Analogously, naive learning often fails to respect fundamental physical invariants motivating physics-guided machine learning (PGML) where explicit physical priors are incorporated into neural architectures~\cite{chen2018neural,toth2019hamiltonian,cranmer2020lagrangian}.
An effective inductive bias should address both paradigms, learning the underlying physical laws and respecting the manifold geometry of \SO.

To this end, we contribute the following:

\begin{itemize}
    \item We propose to model the kinematic evolution of rotating bodies via a latent dynamical system, posing neural controlled differential equations (neural CDEs) as an appealing modeling paradigm for this setting.
    \item We design an integration control signal on the Lie group \SO~to simultaneously handle denoising input states and serve as a numerical integration path for a CDE; our model learns a solution to the underlying physics based on a path in \SO.
    This path is demonstrably suitable for controlling a neural CDE.
    \item Extensive experimental validation confirms our hypothesis that we can approximate a dynamical system with unknown inertial properties and external forces; the proposed method surpasses existing methods in both extrapolation capabilities and robustness to noise. 
    We demonstrate the practicality of our method in simulation and various real-world object-tracking settings.
\end{itemize}

\section{Related Works}

We first review the relevant literature on \SO{} filtering, representation learning, and PGML.

\paragraph{SO(3) Filtering} 
Robustly filtering measurements from visual or inertial sensors is essential for many applications from video stabilization \cite{jia2014constrained}, continuous-time state estimation in robotics \cite{tang2019white,wong2020data,talbot2024continuous}, tracking tasks \cite{fan2025benchmarks}, sensor fusion~\cite{nirmal2016noise,busam2017camera}, and SLAM~\cite{bosse2009continuous,lovegrove2013spline,hug2020hyperslam}.
Filtering approaches using splines~\cite{kim1995general,lovegrove2013spline,haarbach2018survey} must be constructed to preserve manifold properties of \SO~to avoid accruing numerical errors \cite{busam2017camera,wang2023pypose}. 
However, \SO~spline derivatives cannot be solved in closed form, making their integration into robust interpolation schemes computationally intensive~\cite{sommer2020efficient}.
Without optimization, splines are unsuitable for extrapolation as they diverge near interpolation boundaries \cite{persson2021practical}; many online methods requiring state prediction (e.g., ORB-SLAM \cite{mur2015orb}) therefore rely on simplified constant-velocity assumptions.

Savitzky-Golay filtering interpolates \SO~more efficiently by regressing a polynomial in the tangent space and mapping back to the manifold via the exponential map \cite{jongeneel2022geometric}. 
This approach allows for simultaneous angular velocity and acceleration estimation solvable in a linear system, providing a robust method for processing noisy rotation data in \SO, recently exemplified for hand tracking~\cite{fan2025benchmarks}.
However, extrapolation of rotations presents challenges beyond interpolation and is particularly sensitive to error accumulation. 
Existing methods for extrapolation on \SO~have considered Gaussian Processes (GPs) for their ability to model uncertainty and handle continuous-time data \cite{lang2014gaussian,lang2017computationally,lilge2024incorporating,giacomuzzo2024black}, or regress incremental changes in \SE~directly from point cloud sequences \cite{byravan2017se3}.
However, they do not consider physical laws governing the underlying dynamics.

\paragraph{Learning Representations in SO(3)}
The literature on learning 3D rotations has explored various representations, each with distinct trade-offs. 
While classical approaches like Euler angles \cite{kundu20183d}, axis-angle, or quaternions \cite{zhao2020quaternion,he2024nrdf} are intuitive, they can suffer from singularities (e.g., gimbal lock) or have topological constraints that make them challenging to regress directly~\cite{zhou2019continuity,bregier2021deepregression,geist2024learning} and backpropagate through~\cite{chen2022projective,teed2021tangent}. 
This is fundamentally due to \SO~not being homeomorphic to any subset of 4D Euclidean space~\cite{zhou2019continuity}. 
Recent work has demonstrated that these limitations impede deep learning performance~\cite{zhou2019continuity,levinson2020analysis,geist2024learning}, motivating the development of higher-dimensional continuous representations that enable stable optimization by mapping \SO~to an overparameterized continuous space.
Notable approaches include 6D~\cite{zhou2019continuity}, 9D~\cite{levinson2020analysis}, and 10D~\cite{peretroukhin2020smooth} representations.

\paragraph{Physics-Guided Extrapolation}
Various paradigms have been devised to instill data-driven models with priors from known physical laws \cite{willard2020integrating}.
Among these are Physics-Informed Neural Networks \cite{raissi2019physics}, which construct regularizers to make network predictions physically plausible.
Another line of work seeks to learn the Hamiltonian~\cite{toth2019hamiltonian} or Lagrangian~\cite{cranmer2020lagrangian}, ensuring predictions adhere to conservation laws like energy and momentum.
While real-world settings require models to disentangle dynamics from sensor noise, these works focus on learning the physical dynamics from ground truth observations.

On the other hand, Neural Ordinary Differential Equations (neural ODEs) explicitly solve an initial value problem (IVP) parameterized by a latent representation \cite{chen2018neural}.
Several works such as GRU- and RNN-ODEs \cite{rubanova2019latent,de2019gru} have been developed to incorporate additional measurements beyond the initial value more effectively, conditioning neural ODEs with accumulated representations, making them more suitable for time series modeling.
Neural controlled differential equations (Neural CDEs)~\cite{kidger2020neural} achieve this by bridging controlled differential equations with deep neural networks. 
Instead of merely solving an IVP, the hidden state is integrated with the derivative of a control signal.

While Hamiltonian formulations have been used for state prediction from noisy observations in \SO~\cite{mason2023learning} and \SE~\cite{duong2021hamiltonian}, adapting these for non-conservative systems proves cumbersome.
\cite{mason2023learning} introduce a learning-based approach for predicting 3D rotational dynamics from images of rigid bodies with unknown mass distribution. 
They design a network that learns a latent representation homeomorphic to \SO~directly from images to extrapolate an object trajectory.
However, this approach has limitations when modeling noisy signals as it is sensitive to the initial momentum estimation and, by design, cannot handle non-conservative systems.
Furthermore, this confounds object pose estimation with learned dynamics, requiring many annotated image sequences to capture complex dynamics.
Instead, we observe that \SO~dynamics models can be trained in simulation and later coupled with specialized pose detectors \cite{ornek2025foundpose,chen2024secondpose,jung2024housecat6d}, generalizing to noisy real-world dynamics.

\begin{figure*}[t]
  \centering
  \includegraphics[width=\textwidth]{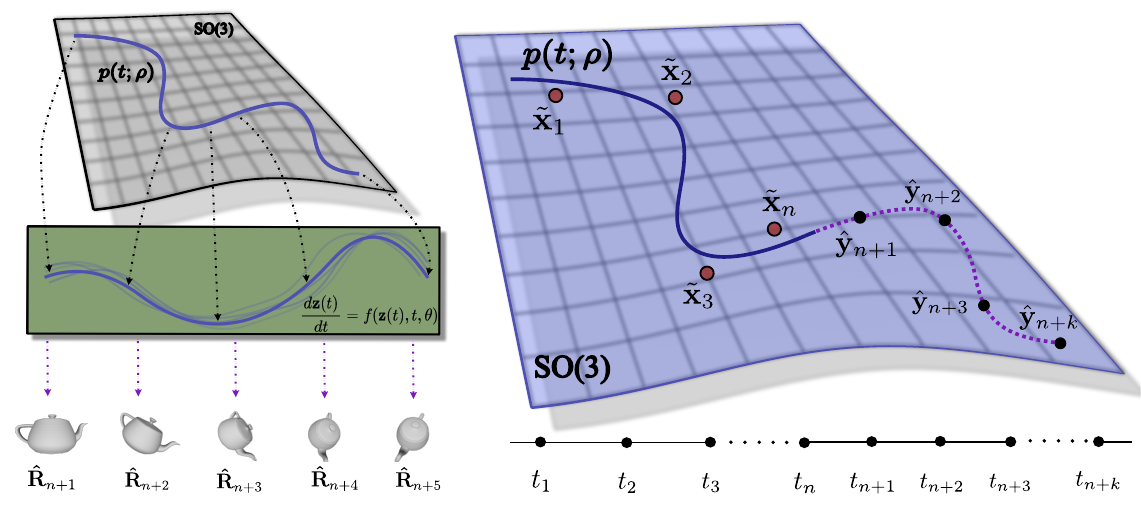}
  \caption{
    We consider the problem of forecasting the rotational motion of a rigid body with unknown physical properties from noisy sensor estimates $\mathbf{\tilde{x}}_k$ in \SO.
    The core of our proposed method is a neural controlled differential equation, which learns a latent representation of the underlying dynamical system with respect to a control path $\mathbf{p}(t; \cdot)$ defined by coefficients $\mathbf{\rho}$ (left).
    We construct this control path directly on the manifold \SO~by filtering noisy input observations with a Savitzky-Golay filter (right).
    This allows the CDE to learn a more robust parameterization for extrapolating rigid-body trajectory estimates $\mathbf{\hat{y}}_k$ than existing methods.
  }
  \label{fig:cde_main_figure}
  \vspace{-3.0mm}
\end{figure*}

\section{Preliminaries}

\paragraph{Rigid-body Dynamics}
While translation and rotation of a rigid body in $\mathbb{R}^3$ each have three degrees of freedom, the space of rotations forms a non-commutative Lie group. 
This makes modeling an evolving trajectory more challenging than the Euclidean translational counterpart.

\begin{definition}[Special Orthogonal Group]
\label{def:so3}
The special orthogonal group $\mathrm{SO}(3)$ represents the set of all rotations about the origin in $\mathbb{R}^3$:
\begin{equation}
\mathrm{SO}(3)=\left\{\mathbf{R} \in \mathbb{R}^{3 \times 3}: \mathbf{R}^{\top} \mathbf{R}=\mathbf{I}, \operatorname{det}(\mathbf{R})=1\right\}
\end{equation}
\end{definition}

\noindent Modeling kinematics also requires derivatives in $\mathrm{SO}(3)$.
The Lie algebra $\mathfrak{so}(3)$, the tangent space at the identity of $\mathrm{SO}(3)$, consists of real $3 \times 3$ skew-symmetric matrices.

\begin{definition}[Hat Operator]
\label{def:hat_operator}
Any $\bomega \in \mathbb{R}^3$ can be associated with an element $\hat{\bomega} \in \mathfrak{so}(3)$ via the hat operator:
\begin{align}
\hat{\bomega} = S(\bomega) = \begin{pmatrix}
0 & -\omega_3 & \omega_2 \\
\omega_3 & 0 & -\omega_1 \\
-\omega_2 & \omega_1 & 0 \\
\end{pmatrix}
\end{align}
where the hat operator $\hat{\bomega} : \mathbb{R}^3 \to \mathfrak{so}(3)$ maps vectors to skew-symmetric matrices.
\end{definition}

The time evolution of the rotation matrix $R(t)$ is given by $\dot{R} = R \hat{\bomega}$ for the angular velocity $\bomega$ \cite{jongeneel2022geometric}.
Here, a differential geometry perspective is useful: $\mathfrak{so}(3)$ is isomorphic to the tangent space $T_{\bR}\mathrm{SO}(3)$ at $\bR$ \cite{lee2017global}; we characterize rotation trajectories using these geometric definitions.

\section{$SO(3)$ Savitzky-Golay Neural-CDEs}
\label{sec:method}
We proceed by describing a method designed to extrapolate noisy rotation estimates of a dynamical system where the moment of inertia and external forces are unknown.
The core of the method consists of a neural CDE, which is integrated with respect to a control signal on the manifold \SO.
We construct this control signal using an \SO~Savitzky-Golay filter \cite{jongeneel2022geometric}, providing a robust de-noised prior for extrapolating the learned latent dynamical system.
An overview can be seen in \cref{fig:cde_main_figure}.

\subsection{$SO(3)$ Neural CDEs}

\begin{definition}[Discrete Trajectories in \SO]
\label{def:observations}
Let $\mathcal{T} = \{(t_k, \bxtilde_k)\}_{k=0}^{\Nsamp}$ be a sequence of observations of an object $\bx$ at discrete time points $t_k$, where $\bxtilde_k \in \SO$ represents a noisy rotational measurement at time $t_k$:
\begin{equation}
\nonumber
\mathcal{T} = \{(t_0, \bxtilde_0), (t_1, \bxtilde_1), \ldots, (t_{\Nsamp}, \bxtilde_{\Nsamp})\} \subset \R \times \SO
\end{equation}
\end{definition}

Such a sequence could come from a rotating (non-zero angular velocity) object trajectory obtained directly from images through a 6D pose estimator \cite{wang2021gdr} or in combination with other sensors.
We seek to estimate the underlying dynamics of the object in a data-driven manner to extrapolate unseen rotations for $\hat{t} > t_N$.

\begin{definition}[Neural CDE \cite{kidger2020neural}]
\label{def:neural_cde}
Let $\Xpath: [t_0, t_n] \rightarrow \R^{v+1}$ be a control path, where $\Xpath_{t_i} = (t_i, \bx_i)$. 
A Neural CDE is defined as:
\begin{equation}
\label{eq:neural_cde}
\bz_t = \bz_{t_0} + \int_{t_0}^t \ftheta(\bz_s) \dX_s \quad \text{for } t \in (t_0, t_n]
\end{equation}
where:
\begin{itemize}
\item $\bz_t \in \R^w$ is the hidden state at time $t$,
\item $\ftheta: \R^w \rightarrow \R^{w \times (v+1)}$ is a neural network with parameters $\thetaparam$,
\item $\bz_{t_0} = \zetatheta(t_0, \bx_0)$ is the initial condition, where $\zetatheta: \R^{v+1} \rightarrow \R^w$ is a network to encode the initial condition $(t_0, \bx_0)$.
\end{itemize}
\end{definition}

\begin{lemma}[Constructing Neural CDEs]
\label{lemma:cde_properties}
The solution $\bz$ to \cref{eq:neural_cde} is the response of a Neural CDE controlled by the path $\Xpath$. The control path $\Xpath$ can be constructed arbitrarily as long as it is continuous and differentiable.
\end{lemma}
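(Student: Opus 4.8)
\noindent\emph{Proof proposal.} The plan is to reduce \cref{eq:neural_cde} to a classical initial value problem and then appeal to standard ODE theory. Since $\Xpath$ is differentiable, the Riemann--Stieltjes integral driving the dynamics collapses to an ordinary Riemann integral, $\int_{t_0}^{t}\ftheta(\bz_s)\,\dX_s=\int_{t_0}^{t}\ftheta(\bz_s)\,\dot{\Xpath}_s\,\mathrm{d}s$, so $\bz$ is a solution of the non-autonomous ODE $\dot{\bz}_t=g(t,\bz_t)$ with $g(t,\bz):=\ftheta(\bz)\,\dot{\Xpath}_t$ and initial value $\bz_{t_0}=\zetatheta(t_0,\bx_0)$. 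Establishing existence and uniqueness of this solution on $[t_0,t_n]$ is exactly what it means for $\bz$ to be ``the response'' of the Neural CDE controlled by $\Xpath$.

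First I would check the hypotheses of a Picard--Lindel\"of (or Carath\'eodory) existence--uniqueness theorem for $g$ over the compact interval $[t_0,t_n]$: the map $t\mapsto\dot{\Xpath}_t$ is (locally) integrable by the assumed regularity of $\Xpath$, and $\bz\mapsto\ftheta(\bz)$ is locally Lipschitz because $\ftheta$ is a finite composition of affine maps with Lipschitz activations. Hence $g$ is measurable in $t$, locally Lipschitz in $\bz$, and dominated by an integrable function on any bounded $\bz$-tube; the theorem then yields a unique maximal solution, and a Gr\"onwall-type a-priori bound (using the at-most-linear growth of $\|\ftheta(\bz)\|$ together with $\dot{\Xpath}\in L^1$) precludes finite-time blow-up, so $\bz$ is defined on all of $[t_0,t_n]$. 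This proves the first sentence of the lemma.

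For the second sentence, I would observe that the only properties of $\Xpath$ used above are continuity and (a.e.) differentiability with integrable derivative --- precisely the stated requirements --- so no further structure of $\Xpath$ is needed; in particular the control may be synthesized from the noisy observations, e.g.\ through the \SO~Savitzky--Golay interpolant of \cref{sec:method}. I would additionally note the Riemann--Stieltjes integral's invariance under orientation-preserving reparameterization of $\Xpath$, which shows the response depends only on the image path (and the terminal time), not on the particular parameterization chosen. The main obstacle is the regularity bookkeeping: ``differentiable'' alone yields a derivative that need not be continuous, which forces the Carath\'eodory version of the argument rather than the textbook Picard--Lindel\"of statement, and one must confirm that the network's activations are globally (not merely pointwise) Lipschitz so that the local Lipschitz constant is uniform on the relevant solution tube. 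Both are routine once the reduction to an ODE is in hand.
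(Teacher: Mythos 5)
The paper itself offers no proof of this lemma: it is imported from Kidger et al.\ \cite{kidger2020neural}, with the surrounding text merely pointing to their cubic-spline construction (the paper's own analysis is reserved for the Savitzky--Golay control path in Appendix~D). Your route --- collapsing the Riemann--Stieltjes integral to $\int_{t_0}^{t}\ftheta(\bz_s)\,\dot{\Xpath}_s\,\mathrm{d}s$, viewing $\bz$ as the solution of the non-autonomous ODE $\dot{\bz}_t=\ftheta(\bz_t)\dot{\Xpath}_t$ with $\bz_{t_0}=\zetatheta(t_0,\bx_0)$, and invoking Picard--Lindel\"of/Carath\'eodory together with a Gr\"onwall bound --- is exactly the standard argument underlying that citation, so your proof is sound and essentially the canonical one rather than a genuinely different approach. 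Two hypotheses you lean on should be stated explicitly, and you already brush against both: (i) the lemma's wording ``continuous and differentiable'' is weaker than what the reduction needs, since a pointwise derivative need not be Riemann- or even Lebesgue-integrable; one should assume $\Xpath$ continuously differentiable, Lipschitz, or absolutely continuous (Kidger et al.\ work with bounded-variation/Lipschitz controls), which is in any case satisfied by the spline and Savitzky--Golay paths used in this paper; (ii) global existence and uniqueness on $[t_0,t_n]$ require $\ftheta$ to be globally Lipschitz (hence of at most linear growth), which is an assumption on the network architecture, not a consequence of the lemma statement. With those two assumptions recorded, your argument is complete and matches the literature the paper defers to.
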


\noindent Kidger et al. \cite{kidger2020neural} construct the path $\Xpath \in \mathcal{C}^2$ as a natural cubic spline over $\mathcal{T}$, while \cite{morrill2022choice} use Hermite splines for their local support.
Others have proposed learning integration paths~\cite{jhin2023learnable}.
We observe that these choices neither obey the geometric structure of \SO, nor are they robust to sensor noise (see \App B.3 for details).

\subsection{Robust $SO(3)$ Savitzky-Golay Regression}
\label{sec:robust_regression}

To address sensor noise from real-world trajectories, we propose constructing an integration path with a robust regression signal to integrate \cref{eq:neural_cde}. 
However, robust regression on $\SO$ is non-trivial due to the non-Euclidean nature of the Lie group.
Our control signal should avoid expensive optimizations or polynomial basis functions unsuited for extrapolation~\cite{sommer2020efficient}.
Savitzky-Golay filters achieve this by constructing a polynomial with global support in the Lie algebra, solvable as a single least squares problem \cite{jongeneel2022geometric}.

\begin{definition}[Lie Algebra Polynomial]
\label{def:lie_algebra_polynomial}
We define a second-order polynomial in the Lie algebra $\so$ as:
\begin{equation}
p(t; \brho) := \brho_0 + \brho_1 t + \frac{1}{2}\brho_2 t^2 \in \so
\end{equation}
where $\brho = [\brho_0; \brho_1; \brho_2] \in \R^9$ represents the vector of polynomial coefficients.
\end{definition}

\begin{definition}[Control Path on $\SO$]
\label{def:control_path}
Given the sequence of noisy observations $\mathcal{T}$ from \cref{def:observations}, we define a smooth control path $\phit \in \SO$ as:
\begin{equation}
\phit = \expmap(p(t-\tk; \brhok))\bxtildek
\end{equation}
where $\tk$ is the anchor point time, $\bxtildek \in \SO$ is the corresponding noisy measurement, $\expmap: \so \to \SO$ is the exponential map, and $\brhok$ represents the polynomial coefficients specific to time $\tk$.
\end{definition}

The control path $\phit$ provides a geometric solution to the path construction problem in \cref{lemma:cde_properties}. 
By designing $\phit$ to be smooth and respect the structure of $\SO$, we can construct a more appropriate control signal for the Neural CDE than traditional cubic or Hermite splines.

\begin{theorem}[Savitzky-Golay Filtering on $\SO$ \cite{jongeneel2022geometric}]
\label{thm:sg_optimization}
The optimal polynomial coefficients $\brhok$ at time $\tk$ can be obtained by solving the following least-squares problem:
\begin{align}
\label{eq:sg_optimization}
\brhok = \arg\min_{\brho} \sum_{m=-n}^{n} \|&(\logmap(\bxtilde_{k+m}\bxtilde_k^{-1}) \nonumber \\
&- p(t_{k+m}-\tk; \brho))\|^2
\end{align}
where $2n+1$ is the window size, and $\logmap: \SO \to \so$ maps the rotation difference to its corresponding element in the Lie algebra.

The optimization problem in \cref{eq:sg_optimization} reduces to:
\begin{equation}
\brhok = \arg\min_{\brho} \|\bA\brho - \bb\|^2
\end{equation}
with solution:
\begin{equation}
\brhok = (\bA^\top\bA)^{-1}\bA^\top\bb
\end{equation}
where $\bA \in \R^{3(2n+1) \times 9}$ and $\bb \in \R^{3(2n+1)}$ are constructed from time and rotation differences, respectively.
\end{theorem}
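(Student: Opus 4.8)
The plan is to reduce the geometric optimization in \cref{eq:sg_optimization} to a standard finite-dimensional linear least-squares problem, the key observation being that the Lie-algebra polynomial $p(t;\brho)$ of \cref{def:lie_algebra_polynomial} is \emph{affine} — in fact linear — in its coefficient vector $\brho$. First I would fix the anchor index $k$ and, for each offset $m\in\{-n,\dots,n\}$, introduce the shorthands $\tau_m := t_{k+m}-\tk$ and $\bb_m := \logmap(\bxtilde_{k+m}\bxtilde_k^{-1})$, identifying $\so$ with $\R^3$ through the inverse hat operator $S(\cdot)$ so that both $\bb_m$ and $p(\tau_m;\brho)$ are genuine vectors in $\R^3$ and the summand norm in \cref{eq:sg_optimization} is the ordinary Euclidean norm (the constant factor $\sqrt{2}$ relating the Frobenius norm on skew-symmetric matrices to the Euclidean norm on $\R^3$ is irrelevant for the $\arg\min$).

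Next I would write $p(\tau_m;\brho) = \bA_m\brho$ with $\bA_m := \begin{bmatrix}\bI_3 & \tau_m\bI_3 & \tfrac12\tau_m^2\bI_3\end{bmatrix}\in\R^{3\times 9}$, which is immediate from $p(\tau;\brho) = \brho_0 + \tau\brho_1 + \tfrac12\tau^2\brho_2$. Stacking the $2n+1$ blocks vertically defines $\bA := [\bA_{-n}^\top\ \cdots\ \bA_n^\top]^\top \in \R^{3(2n+1)\times 9}$ and $\bb := [\bb_{-n}^\top\ \cdots\ \bb_n^\top]^\top\in\R^{3(2n+1)}$ as the matrices built from time and rotation differences asserted in the statement, so that $\sum_m \|\bb_m - \bA_m\brho\|^2 = \|\bb - \bA\brho\|^2 = \|\bA\brho - \bb\|^2$; this is exactly the claimed reduction.

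It then remains to solve $\min_\brho \|\bA\brho-\bb\|^2$. Expanding the squared norm and setting the gradient to zero gives the normal equations $\bA^\top\bA\,\brho = \bA^\top\bb$, and by convexity any solution is a global minimizer. To reach the closed form $\brhok = (\bA^\top\bA)^{-1}\bA^\top\bb$ I would establish that $\bA$ has full column rank $9$: its columns decouple across the three spatial coordinates, so it suffices that the scalar matrix with rows $[1,\ \tau_m,\ \tfrac12\tau_m^2]$ has rank $3$, which holds by a Vandermonde argument whenever the window contains at least three distinct time stamps $\tau_m$ — guaranteed for $2n+1\ge 3$ with strictly increasing $t_k$. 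Hence $\bA^\top\bA$ is symmetric positive definite, invertible, and the minimizer is unique and given by the stated expression.

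Every step is routine; the only points needing a little care are (i) the identification $\so\cong\R^3$ and the induced norm, so the summands are genuinely Euclidean least-squares residuals, and (ii) the full-column-rank condition on $\bA$, which is what legitimizes inverting $\bA^\top\bA$ rather than passing to a pseudoinverse. This rank/well-posedness check is the closest thing to an obstacle here, and it is mild.
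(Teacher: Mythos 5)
Your proposal is correct and follows essentially the same route the paper takes (the cited construction in App.~C.3 and the rank argument reused in the proof of Prop.~10): identify $\so\cong\R^3$, exploit linearity of $p$ in $\brho$ to stack the residuals into $\bA = \hat{A}\otimes\bI_3$ and $\bb$, and solve the normal equations after noting $\hat{A}$ is Vandermonde with full column rank so that $\bA^\top\bA$ is invertible. Nothing is missing; your explicit remarks on the $\sqrt{2}$ norm factor and the $2n+1\ge 3$ distinct-timestamp condition are the same mild well-posedness checks the paper makes.
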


\noindent The constructions of $\bA$ and $\bb$ are detailed in \App C.2.

The filter described in \cref{def:control_path} can be used to numerically integrate the Neural CDE in \cref{eq:neural_cde} directly.
However, Savitzky-Golay filters exhibit artifacts near the boundary $t_N$, which can hinder extrapolating the learned system dynamics.
Therefore, we propose weighting the Savitzky-Golay trajectory with a weight matrix as proposed for the Euclidean analog \cite{schmid2022and}. 
Unlike existing works, we construct this out of learnable parameters $\mathbf{W}$, leading to the following weighted formulation:

\begin{lemma}[Weighted Savitzky-Golay Optimization]
\label{lemma:weighted_sg}
The polynomial coefficients of the weighted Savitzky-Golay filter can be computed in closed form as:
\begin{equation}
\boldsymbol{\rho} = (\mathbf{A}^\top \mathbf{W} \mathbf{A})^{-1} \mathbf{A}^\top \mathbf{W} \mathbf{b}
\end{equation}
where $\mathbf{W}$ consists of learnable parameters that can be updated through stochastic gradient descent.
\end{lemma}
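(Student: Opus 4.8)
The plan is to recognize the weighted filter as an ordinary \emph{weighted} least-squares problem and read off its normal equations. Starting from \cref{thm:sg_optimization}, I would replace the unweighted sum of squared residuals by a quadratic form weighted by $\mathbf{W}$: define
\begin{equation}
J(\brho) := (\bA\brho - \bb)^\top \mathbf{W} (\bA\brho - \bb),
\end{equation}
where $\mathbf{W} \in \R^{3(2n+1)\times 3(2n+1)}$ is assembled from the learnable parameters. Without loss of generality $\mathbf{W}$ may be taken symmetric (replacing it by $\tfrac12(\mathbf{W}+\mathbf{W}^\top)$ leaves $J$ unchanged), and I would require it positive definite — a constraint imposed on the parameterization of $\mathbf{W}$ (\eg, a Cholesky factor or a softplus-activated diagonal) precisely so the argument below closes. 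Setting $\mathbf{W}=\bI$ recovers \cref{eq:sg_optimization}.

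Next I would compute the stationarity condition. Since $J$ is quadratic in $\brho$, its gradient is $\nabla_{\brho} J(\brho) = 2\bA^\top \mathbf{W} (\bA\brho - \bb)$; setting it to zero gives the weighted normal equations $\bA^\top \mathbf{W} \bA\, \brho = \bA^\top \mathbf{W} \bb$. The Hessian $2\bA^\top \mathbf{W} \bA$ is constant in $\brho$, so any stationary point is a global minimizer, and it remains only to show uniqueness, i.e.\ that $\bA^\top\mathbf{W}\bA$ is invertible. For this I would argue that $\bA$ has full column rank: $\bA$ stacks the polynomial basis $(1, t, \tfrac12 t^2)$ (tensored with $\bI_3$) evaluated at the $2n+1$ distinct offsets $t_{k+m}-\tk$, so for window size $2n+1 \geq 3$ the associated Vandermonde-type block is nonsingular and $\operatorname{rank}(\bA)=9$ — the same well-posedness already implicit in \cref{thm:sg_optimization}. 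Combined with $\mathbf{W}\succ 0$, for any $\bu \neq 0$ we have $\bu^\top \bA^\top \mathbf{W} \bA \bu = (\bA\bu)^\top \mathbf{W} (\bA\bu) > 0$ since $\bA\bu \neq 0$; hence $\bA^\top\mathbf{W}\bA$ is positive definite, and $\brho = (\bA^\top\mathbf{W}\bA)^{-1}\bA^\top\mathbf{W}\bb$ is the unique minimizer, as claimed.

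Finally, to justify ``can be updated through stochastic gradient descent'' I would note that $\mathbf{W} \mapsto \brho$ is a composition of matrix multiplications and a matrix inversion, the latter smooth on the open set of invertible matrices (with differential $\partial \mathbf{M}^{-1} = -\mathbf{M}^{-1}(\partial\mathbf{M})\mathbf{M}^{-1}$), so $\brho$ is a differentiable function of the parameters of $\mathbf{W}$ and gradients propagate through the closed form.

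The main obstacle — really the only point needing care — is the invertibility claim: one must (i) confirm $\bA$ is full column rank under the stated window size, which relies on the distinctness of the time offsets, and (ii) ensure the \emph{learnable} $\mathbf{W}$ remains symmetric positive definite throughout training. The latter is a modeling choice about how $\mathbf{W}$ is parameterized rather than an automatic consequence of the least-squares formulation; everything else is the standard weighted-least-squares derivation.
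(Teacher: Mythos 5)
Your derivation is correct and takes essentially the same route as the paper: the closed form is just the weighted normal equations of the quadratic objective, with invertibility of $\bA^\top \bW \bA$ resting on the full column rank of $\bA$ (which the paper establishes in its Appendix D via $\operatorname{rank}(\bA) = \operatorname{rank}(\hat{A})\cdot\operatorname{rank}(\bI_3)$ for a sufficiently large window) together with positive definiteness of $\bW$, and differentiability of the solve justifying the SGD update. The only difference is one of emphasis: the paper takes $\bW$ diagonal (expanded with $\otimes\,\bI_3$), so symmetry is automatic, and reports that in practice no positivity constraint is imposed on $\bW$, whereas you correctly point out that the unique-minimizer interpretation strictly requires $\bW \succ 0$ to be guaranteed by the parameterization.
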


\noindent Our model learns to appropriately weight this denoising signal in a manner well-suited for extrapolation while evolving the dynamical system in \cref{eq:neural_cde}.

\subsection{$SO(3)$ Savitzky-Golay Neural-CDEs}

Control paths for Neural CDEs are discussed extensively in \cite{morrill2022choice}; the authors show that a desirable control function should be sufficiently smooth and bounded to attain the universal approximation properties of Neural CDEs.

We show that $SO(3)$ Savitzky-Golay filters admit these properties in what follows.
To mitigate discontinuities and non-bounded terms in the piece-wise polynomial regression, we regress $\phit$ by defining a window over the last $2n+1$ samples of $\{\bxtildek\}_{k=0}^{\Nsamp}$. We can then observe that this global Savitzky-Golay filter attains the desirable properties of a control path for \cref{eq:neural_cde} in the following informal result, as long as the rotational difference is bounded within this interval.
This last assumption is required to avoid discontinuities of $\logmap: \SO \to \so$ at $\pi$, which we can assure with a sufficiently high sampling rate or small enough window size.

\begin{proposition}
\label{prop:control_signal}
Let $\phit$ be the control path constructed in \cref{def:control_path}. If the maximum rotational difference $\delta_n = \max_k \|\logmap(\bxtilde_k \bxtilde_0^{-1})\|$ within the window is bounded, then:
\begin{enumerate}
    \item $\phit$ is analytic and twice differentiable,
    \item The derivatives $\varphi'(t)$ and $\varphi''(t)$ are bounded,
    \item $\phit$ uniquely minimizes the problem in \cref{thm:sg_optimization}.
\end{enumerate}
\end{proposition}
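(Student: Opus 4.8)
I would prove the three assertions separately, in the stated order, letting the hypothesis enter only where it is actually needed: $\delta_n$ being bounded (in the sense of the remark preceding the proposition, i.e. small enough that every relevant relative rotation has angle below $\pi$) is used in (3) to make $\logmap$ unambiguous and hence $\phit$ well defined, and it propagates into the quantitative constants of (2); claim (1) needs nothing beyond smoothness of the exponential map. Throughout, fix the window anchor index $k$ and write $q(t):=p(t-\tk;\brhok)\in\so$, a matrix-valued polynomial of degree $\le 2$ in $t$, so that $q'(t)=\boldsymbol{\rho}_{k,1}+\boldsymbol{\rho}_{k,2}(t-\tk)$ is affine and $q''(t)=\boldsymbol{\rho}_{k,2}$ is constant, and $\phit=\expmap(q(t))\,\bxtildek$.

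\textbf{Claim (1): analyticity.} The polynomial $q:\R\to\so\subset\Rthreebythree$ is real-analytic. The exponential map $\expmap:\so\to\SO$ is real-analytic: it is given by the everywhere-convergent series $\sum_{j\ge0}(\cdot)^j/j!$, equivalently by Rodrigues' formula, whose coefficients $\tfrac{\sin\vartheta}{\vartheta}$ and $\tfrac{1-\cos\vartheta}{\vartheta^2}$ (with $\vartheta$ the rotation angle) are entire in $\vartheta^2$, so the apparent singularity at $\vartheta=0$ is removable. Right-multiplication by the fixed $\bxtildek\in\SO$ is linear, hence analytic. A composition and product of real-analytic maps is real-analytic, so $\phit$ is real-analytic, in particular $C^\infty$ and therefore twice differentiable.

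\textbf{Claim (2): bounded derivatives.} Differentiating, $\varphi'(t)=\expmap(q(t))\,\mathrm{dexp}_{q(t)}\!\big(q'(t)\big)\,\bxtildek$, and differentiating once more expresses $\varphi''(t)$ as a finite sum of products of $\expmap(q(t))$, the trivialized derivative operator $\mathrm{dexp}_{q(t)}$ and its $t$-derivative, and the polynomials $q'(t),q''(t)$. Now $\normF{\expmap(q(t))}=\sqrt3$ since it is a rotation matrix; the $\so$ closed form of $\mathrm{dexp}_{q(t)}$ (and of $\tfrac{d}{dt}\mathrm{dexp}_{q(t)}$) involves only $\sin,\cos$ of $\|q(t)\|$ times bounded powers of $\|q(t)\|^{-1}$ — one checks these are in fact \emph{globally} bounded operators, since the offending terms behave like $\tfrac{1-\cos\theta}{\theta}$ and $\tfrac{\theta-\sin\theta}{\theta}$; and $q',q''$ are continuous, hence bounded on any finite extrapolation horizon $[t_0,\hat t]$, with size controlled by $\|\brhok\|$ and $\hat t-\tk$. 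Finally $\|\brhok\|\le\|(\bA^\top\bA)^{-1}\bA^\top\|\,\|\bb\|$ with $\|\bb\|=(\sum_m\|\logmap(\bxtilde_{k+m}\bxtilde_k^{-1})\|^2)^{1/2}$, which is finite and indeed $O(\sqrt{n}\,\delta_n)$ by bi-invariance of the geodesic distance and a triangle inequality relating the anchors $\bxtilde_k$ and $\bxtilde_0$. Hence $\varphi'(t),\varphi''(t)$ are bounded. I expect this bookkeeping to be the main obstacle; the clean route is to obtain qualitative boundedness from (1) plus compactness of $[t_0,\hat t]$, and to use the $\delta_n$ bound only to keep the fitting targets (and thus $\phit$ itself) away from the $\logmap$ singularity, rather than to chase sharp constants.

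\textbf{Claim (3): uniqueness.} The reduced objective of \cref{thm:sg_optimization} is $\|\bA\brho-\bb\|^2$, a convex quadratic whose minimizer is unique iff $\ker\bA=\{0\}$, i.e. $\bA^\top\bA\succ0$. From the construction of $\bA$ (\App C.2), its block row for sample $m$ is $[\,\bI\ \mid\ \tau_m\bI\ \mid\ \tfrac12\tau_m^2\bI\,]$ with $\tau_m=t_{k+m}-\tk$, so $\bA=V\otimes\bI$ where $V\in\R^{(2n+1)\times3}$ has rows $(1,\tau_m,\tfrac12\tau_m^2)$; thus $\operatorname{rank}\bA=3\operatorname{rank}V$, and since there are $2n+1\ge3$ distinct sampling times, $V$ is a (scaled) Vandermonde matrix of full column rank $3$, whence $\bA^\top\bA=(V^\top V)\otimes\bI$ is invertible and $\brhok=(\bA^\top\bA)^{-1}\bA^\top\bb$ is the unique minimizer. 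The boundedness of $\delta_n$ guarantees each $\bxtilde_{k+m}\bxtilde_k^{-1}$ has rotation angle below $\pi$, so $\logmap$ returns the unique principal logarithm, $\bb$ is unambiguous, and therefore so are $\brhok$ and $\phit=\expmap(p(\cdot-\tk;\brhok))\bxtildek$. This establishes (3) and completes the proof, the genuine work being confined to (2) while (1) and (3) reduce to standard facts about the matrix exponential and linear least squares.
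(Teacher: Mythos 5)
Your proposal is correct and follows essentially the same route as the paper's proof: analyticity via composition of the degree-two Lie-algebra polynomial with the analytic exponential map, boundedness via the least-squares bound $\|\brho\| \le \|(\bA^\top\bA)^{-1}\|\,\|\bA^\top\bb\|$ with $\bb$ controlled by the bounded rotational differences, and uniqueness from the full column rank of the Kronecker/Vandermonde-structured $\bA$ giving strict convexity. The only differences are cosmetic (you phrase the derivative bounds via $\mathrm{dexp}$ rather than the paper's cited directional-derivative formulas, and you place the $\delta_n<\pi$ hypothesis on the well-definedness of $\logmap$ rather than on smoothness), while the paper additionally treats the learned weight matrix $\bW$ and remarks on the irregular-sampling caveat.
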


\begin{proof}[Proof sketch]
As $\phit$ is defined as a composition of polynomial and analytic functions on $\SO$ (see \cref{def:control_path}), 
we observe that the first and second derivatives also take this form.  
Boundedness can be shown by analyzing the construction of the polynomial coefficients $\brho$ with respect to $\delta_n$ and considering the derivatives of $\phit$. 
Uniqueness follows from strict convexity of the optimization problem in~\cref{thm:sg_optimization}.
\end{proof}

We treat this thoroughly in \App D.
This result assures that the approximated evolving trajectory $\phit$ is well behaved as a control path for \cref{eq:neural_cde}.
In addition to fulfilling the properties of a suitable control path for a CDE, this approach is more robust for extrapolation than hermite splines and allows more stable numerical integration (see \cref{sec:experimental_main}).

\begin{table*}[ht!]
\resizebox{\textwidth}{!}{
   \setlength{\tabcolsep}{7pt}
   \small
   \centering
  \begin{tabular}{c|lccccc}
    \toprule
    \makecell{Extrapolation \\ Horizon} & Method & \makecell{Freely\\ Rotating} & \makecell{Linear\\ Control} & \makecell{Velocity\\ Damping} & \makecell{Configuration-\\Dependent Torque} & \makecell{Variable\\ Dynamics} \\
    \midrule
\parbox[t]{2mm}{\multirow{4}{*}{\rotatebox[origin=c]{0}{\textit{\shortstack{$t=0.8$ \\ \textit{seconds}}}}}}
 & LEAP (\cite{jhin2023learnable}) & $7.67 \pm 4.76$ & $3.98 \pm 3.76$ & $4.12 \pm 1.91$ & $5.45 \pm 2.58$ & $11.45 \pm 7.67$ \\
 & Conservational (\cite{mason2023learning}) & $2.63 \pm 0.22$ & $2.16 \pm 0.03$ & $2.14 \pm 0.02$ & $2.17 \pm 0.03$ & $2.72 \pm 0.24$ \\
 & \SO-GRU & \textcolor{blue}{$1.65 \pm 0.06$} & \textcolor{blue}{$0.82 \pm 0.03$} & \textcolor{blue}{$0.78 \pm 0.02$} & \textcolor{blue}{$0.88 \pm 0.02$} & \textcolor{blue}{$1.81 \pm 0.10$} \\
 & \SO-nCDE & $2.25 \pm 0.18$ & $1.11 \pm 0.07$ & $1.07 \pm 0.05$ & $1.23 \pm 0.07$ & $2.30 \pm 0.22$ \\
 & SG-nCDE (Ours) & \boldmath \textbf{$0.87 \pm 0.05$} & \boldmath \textbf{$0.49 \pm 0.03$} & \boldmath \textbf{$0.42 \pm 0.02$} & \boldmath \textbf{$0.58 \pm 0.03$} & \boldmath \textbf{$0.89 \pm 0.07$} \\
\midrule
\parbox[t]{2mm}{\multirow{4}{*}{\rotatebox[origin=c]{0}{\textit{\shortstack{$t=1.2$ \\ \textit{seconds}}}}}}
 & LEAP (\cite{jhin2023learnable}) & $11.78 \pm 1.56$ & $6.51 \pm 2.73$ & $4.76 \pm 0.79$ & $7.28 \pm 1.51$ & $15.12 \pm 5.95$ \\
 & Conservational (\cite{mason2023learning}) & $4.09 \pm 0.34$ & $3.18 \pm 0.05$ & $3.14 \pm 0.04$ & $3.20 \pm 0.05$ & $4.28 \pm 0.44$ \\
 & \SO-GRU & \textcolor{blue}{$2.67 \pm 0.11$} & \textcolor{blue}{$1.34 \pm 0.05$} & \textcolor{blue}{$1.12 \pm 0.04$} & \textcolor{blue}{$1.43 \pm 0.05$} & \textcolor{blue}{$2.71 \pm 0.19$} \\
 & \SO-nCDE & $3.77 \pm 0.26$ & $1.76 \pm 0.10$ & $1.68 \pm 0.09$ & $1.83 \pm 0.10$ & $3.89 \pm 0.42$ \\
 & SG-nCDE (Ours) & \boldmath \textbf{$1.28 \pm 0.08$} & \boldmath \textbf{$0.64 \pm 0.03$} & \boldmath \textbf{$0.50 \pm 0.03$} & \boldmath \textbf{$0.74 \pm 0.04$} & \boldmath \textbf{$1.31 \pm 0.14$} \\
    \bottomrule
  \end{tabular}%
  \vspace{-3mm}
  }
\caption{Trajectory prediction accuracy on $0.8$ and $1.2$ second extrapolation in rotational error (degrees). We evaluate several recent methods for forecasting temporal dynamics on scenarios encompassing known and unknown moments of inertia and various external forces. \textbf{Best} and \textcolor{blue}{second-best} results are emphasized. \vspace{-3mm}}
 \label{table:main_results}
\end{table*}

\subsection{Higher-order Control Paths}

Besides the encoded latent initial condition $z_{t_0}$ and the first-order derivative $\dX$, our model has no input on how angular velocity changes over time.
This can cause temporal phase shifts where trajectories with the correct predicted orientation lag behind ground truth trajectories.

\begin{lemma}[Second-Order Neural CDE]
\label{lemma:second_order_cde}
We propose a second-order Neural CDE that incorporates both the first and second derivatives of the control path:
\begin{equation}
\bz_t = \bz_{t_0} + \int_{t_0}^t \ftheta(\bz_s) \dX_s + \gtheta(\bz_s) \ddX_s
\end{equation}
for $t \in (t_0, t_n]$ where $\ftheta$ and $\gtheta$ are neural networks with parameters $\thetaparam$ and $\thetaparam'$, and $\bz_{t_0}$ is the initial condition. 
\end{lemma}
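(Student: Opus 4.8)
The plan is to establish that the second-order Neural CDE of \cref{lemma:second_order_cde} is (i) well-posed, admitting a unique solution $\bz$ on $[t_0,t_n]$, and (ii) equivalent to an ordinary first-order Neural CDE (\cref{def:neural_cde}) driven by an augmented control path, so that it inherits \cref{lemma:cde_properties} together with the smoothness/boundedness route to universal approximation of \cite{morrill2022choice}. The central device is a control augmentation: I would define $\bar{X}\colon[t_0,t_n]\to\R^{2(v+1)}$ by $\bar{X}_t = (X_t,\, \mathrm{d}X_t/\mathrm{d}t)$, i.e.\ stacking the Savitzky--Golay path $\phit$ (in its ambient embedding in $\R^{3\times 3}$) together with its first derivative, so that $\mathrm{d}\bar{X}_t = (\varphi'(t),\, \varphi''(t))\,\mathrm{d}t$. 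Concatenating the two vector fields along the channel dimension, $\bar{f}(\bz) := [\,\ftheta(\bz)\ \ \gtheta(\bz)\,]\in\R^{w\times 2(v+1)}$ (with combined parameter set $(\thetaparam,\thetaparam')$), gives $\bar{f}(\bz_s)\,\mathrm{d}\bar{X}_s = \ftheta(\bz_s)\,\dX_s + \gtheta(\bz_s)\,\ddX_s$, so the integral equation in \cref{lemma:second_order_cde} is exactly $\bz_t = \bz_{t_0} + \int_{t_0}^t \bar{f}(\bz_s)\,\mathrm{d}\bar{X}_s$, which is of the form \cref{eq:neural_cde} with control $\bar{X}$.

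It then remains to verify the hypotheses. By \cref{prop:control_signal}, $\phit$ is analytic on the window with bounded first and second derivatives; hence $\varphi'$ and $\varphi''$ are continuous and bounded, so $\bar{X}$ is itself $C^1$ (indeed analytic) and bounded, and therefore admissible in the sense of \cref{lemma:cde_properties}. For well-posedness I would pass to the ODE form $\dot{\bz}_s = \ftheta(\bz_s)\,\varphi'(s) + \gtheta(\bz_s)\,\varphi''(s)$, observe that $\ftheta$ and $\gtheta$ are locally Lipschitz (smooth networks with Lipschitz activations) while the driving signals $\varphi',\varphi''$ are continuous and bounded on the compact interval $[t_0,t_n]$, conclude that the right-hand side is Lipschitz in $\bz$ and continuous in $s$, and invoke Picard--Lindelöf to obtain a unique solution on all of $[t_0,t_n]$. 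Universal approximation then follows by running the argument of \cite{morrill2022choice} with the smooth, bounded control $\bar{X}$ in place of the cubic-spline control of \cite{kidger2020neural}.

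The main obstacle is making the augmentation genuinely legitimate rather than merely formal: one must confirm that $\ddX$ exists as a bona fide Riemann--Stieltjes differential, which is precisely where the \emph{analyticity} (not merely $C^1$-regularity) of $\phit$ guaranteed by \cref{prop:control_signal} is needed, and one must track how the Lipschitz constants of $\ftheta,\gtheta$ and the bounds on $\varphi',\varphi''$ combine under concatenation so that no constant blows up over the window. A secondary subtlety is that the control lives on the manifold $\SO$; I would sidestep this by working throughout in the ambient embedding $\R^{3\times 3}$, where $\phit$ and all its derivatives are globally defined and analytic (by the construction in \cref{def:control_path}), so the standard Euclidean Neural CDE theory applies without modification.
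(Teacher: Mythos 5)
The paper never actually proves this statement: \cref{lemma:second_order_cde} is stated as a modeling proposal (a definition, in effect), and the only supporting material is the remark that the Savitzky--Golay filter supplies $\dX$ and $\ddX$ in closed form together with \cref{prop:control_signal} (proved in \App D), which gives smoothness and boundedness of $\varphi'$ and $\varphi''$. Your augmentation argument is correct and goes further than the paper does: by stacking $\bar{X}_t=(X_t,X'_t)$ and concatenating $\ftheta,\gtheta$ into a single vector field, you exhibit the second-order equation as an ordinary Neural CDE in the sense of \cref{def:neural_cde} driven by $\bar{X}$, so that \cref{lemma:cde_properties}, the boundedness guarantees of \cref{prop:control_signal}, and Picard--Lindel\"of give existence, uniqueness, and the route to the universal-approximation argument of Morrill et al. This is the standard and natural way to legitimize the two-differential formulation, and it is exactly the justification the paper leaves implicit; the paper buys brevity, your version buys an explicit well-posedness statement. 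Three small caveats: (i) extending the local Picard--Lindel\"of solution to all of $[t_0,t_n]$ needs a global Lipschitz or linear-growth condition on $\ftheta,\gtheta$ (true for the usual architectures with Lipschitz activations, but worth stating); (ii) analyticity is not what makes $\ddX$ a legitimate Riemann--Stieltjes differential --- twice continuous differentiability with bounded $\varphi''$, i.e.\ Lipschitz $X'$, already suffices, so your "main obstacle" is milder than you suggest; and (iii) \cref{prop:control_signal} holds only under its bounded-rotational-difference hypothesis on the filter window, so that assumption must be carried along when you invoke it for the augmented control.
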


\noindent The second-order term $\gtheta(\bz_s) \ddX_s$ provides explicit acceleration information to the neural ODE, enabling more accurate modeling of systems with non-constant angular velocity.
The Savitzky-Golay filter on $\SO$ produces both $\dX$ and $\ddX$ in closed form, making it particularly well-suited for our proposed Neural CDE formulations.

\subsection{Learning Rotational Kinematics}

The literature supports using 9D rotation matrices as a latent representation for learning on \SO~\cite{zhou2019continuity,bregier2021deepregression,geist2024learning}.
We therefore use the 9D rotational derivatives $\dot{R} = R \hat{\bomega}$ obtained from \cref{sec:robust_regression} directly as an integration path for the latent state in \cref{eq:neural_cde}. 
Subsequently, the time-evolved hidden state of the Neural CDE is projected into the 6D representation \cite{zhou2019continuity}, which has been demonstrated suitable as a learning target of a neural network:
\begin{align}
    \label{eq:6D_representation}
    \mathbf{r_k}=\left(\mathbf{\nu}_1, \mathbf{\nu}_2\right) \in \mathbb{R}^{3 \times 2}
\end{align}

The prediction $\mathbf{y_k} \in \SO$ can then be recovered via Gram-Schmidt orthonormalization (GSO) \cite{geist2024learning}.
We now optimize to minimize the geodesic distance to learn the latent code $\theta$ of the network $\mathbf{f}_\theta$ in \cref{eq:neural_cde}:
\begin{equation}
\label{eq:general_loss}
\arg \min_\theta \sum_{k=1}^m ||{\logmap(\mathbf{y_k}}\mathbf{x_k^{-1}})||_F
\end{equation}

\noindent where $\mathbf{\hat{y}_k}$ is the rotation matrix reconstructed via GSO obtained by solving the Neural CDE at time $t_k$, $\mathbf{x_k}$ is the ground-truth observation, and $||\cdot||_F$ denotes the Frobenius norm.
During inference, we proceed by evolving the differential equation in \cref{eq:neural_cde} forward in time for each future point $k \in [t_{N+1}, t_{N+m}]$ (see \cref{fig:cde_main_figure}).

\section{Emperical Evaluation}
\label{sec:dataset_generation}

We conduct extensive experiments on simulated rotational trajectories to understand the behavior of rotation extrapolation methods.
Furthermore, we use models trained in simulation for rotational forecasting on real-world data from the Oxford Motion Dataset (MOD) \cite{judd2019oxford} and rotational estimates from a modern 6D-pose estimator \cite{wang2021gdr} to demonstrate generalization to unseen dynamics and sensor noise.

\paragraph{$\mathbf{SO}(3)$ Kinematics}
The motion of rotating rigid bodies follows well-established physical principles that can be derived from classical mechanics; using these, we can simulate diverse rotational trajectories to study \SO~forecasting in a principled manner.

\begin{definition}[Moment of Inertia Tensor]
\label{def:moi_tensor}
The moment of inertia (MOI) tensor $\mathbf{J} \in \mathbb{R}^{3 \times 3}$ is symmetric positive-definite characterizing a rigid body's resistance to rotational motion. There exists a coordinate system (principal axes) where $\mathbf{J}$ is diagonal \cite{goldstein2002classical} (see App. C.1).
\end{definition}

In real-world applications with dissipative forces and measurement noise, strict conservation laws may not always hold.
We, therefore, simulate both conservational (as in \cite{mason2023learning}) and non-conservational ($\tau_{\text{ext}} \neq 0$) trajectories.

\begin{definition}[Rotational Rigid Body Dynamics]
\label{def:rotational_dynamics}
The following equations govern rigid rotational dynamics in 3D:
\begin{align}
\dot{R} &= R \hat{\omega}, \\
\dot{\omega} &= \bJ^{-1} (\tau_{\text{ext}} - \omega \times \bJ \omega),
\end{align}
where:
\begin{itemize}
\item $R(t) \in \SO$ is the rotation matrix representing orientation,
\item $\omega (t) \in \mathbb{R}^3$ is the body-frame angular velocity,
\item $\mathbf{J} \in \mathrm{R}^{3 \times 3}$ is the (diagonal) inertia matrix,
\item $\tau_{\text{ext}} \in \mathbb{R}^3$ represents external torques.
\end{itemize}
\end{definition}

\paragraph{Experimental Design} 
We evaluate our method on five simulated scenarios representing diverse, realistic dynamics by integrating the equations in \cref{def:rotational_dynamics} forward in time with $dt = 10^{-3}$. 
To understand the behavior of non-conservative systems, we explicitly define different representations for external forces $\tau_{\text{ext}}$:

\begin{itemize}
    \item \textit{Freely Rotating Body:} No external torques ($\tau_{\text{ext}} = 0$). Angular momentum is conserved.
    \item \textit{Linear Torque Control:} A linear control law $\tau_{\text{ext}} = \mathbf{J}(\bA\omega + \bb)$ where $\bA \in \mathbb{R}^{3\times3}$.
    \item \textit{Velocity Damping:} Direct damping as a function of angular velocity through $\tau_{\text{ext}} = \mathbf{J}(\bD\omega)$ with negative definite damping matrix $\bD$.
    \item \textit{Combined External Torque:} Superposition of damped rotation and an external torque computed in the world frame, representing conservative force fields (e.g., magnetic, gravitational, or torsional forces) where the torque depends explicitly on orientation with contributions weighted by $\bJ$.
    \item \textit{Variable Dynamics:} Trajectories sampled uniformly from the four scenarios above.
\end{itemize}

\paragraph{Data Simulation} 
We generate $8000$ 10-second trajectories for each simulation above, separated into four splits.
Each split is simulated based on a distinct MOI distribution to assess model generalization.
We use 2/1/1 splits for training, validation (model selection and hyperparameter tuning), and testing.
During each training iteration, a model sees a $1.2$ second trajectory segment sampled uniformly at $dt=10^{-1}$ and is regularized on predictions for the next $0.8$ seconds.
Batches are sampled randomly from all trajectories and their segments within a given split.
Additional details are provided in \App B. 1.

\paragraph{Evaluation Metrics}
During evaluation, we measure the rotational geodesic error (RGE) \cite{huynh2009metrics} each prediction:
\begin{equation}
\mathrm{RGE}(R_1, R_2) = 2 \arcsin\left( \frac{\| R_2 - R_1 \|_F}{2\sqrt{2}} \right)    
\end{equation}

\noindent Noise for the input sequences is sampled from $\mathcal{N}(0, \delta)$ with $\delta \in [0.01, 0.05]\pi$. $0.05 \pi$ corresponds to a rotation of $\approx 9$°, which is already considerable noise (for instance, in 6D pose estimation $2^\circ$ is used as a threshold \cite{wang2021gdr}).
Results for $\delta = 0.05\pi$ are reported in \cref{table:main_results}.

\begin{figure*}[t!]
    \vspace{-3mm}
    \includegraphics[width=\textwidth]{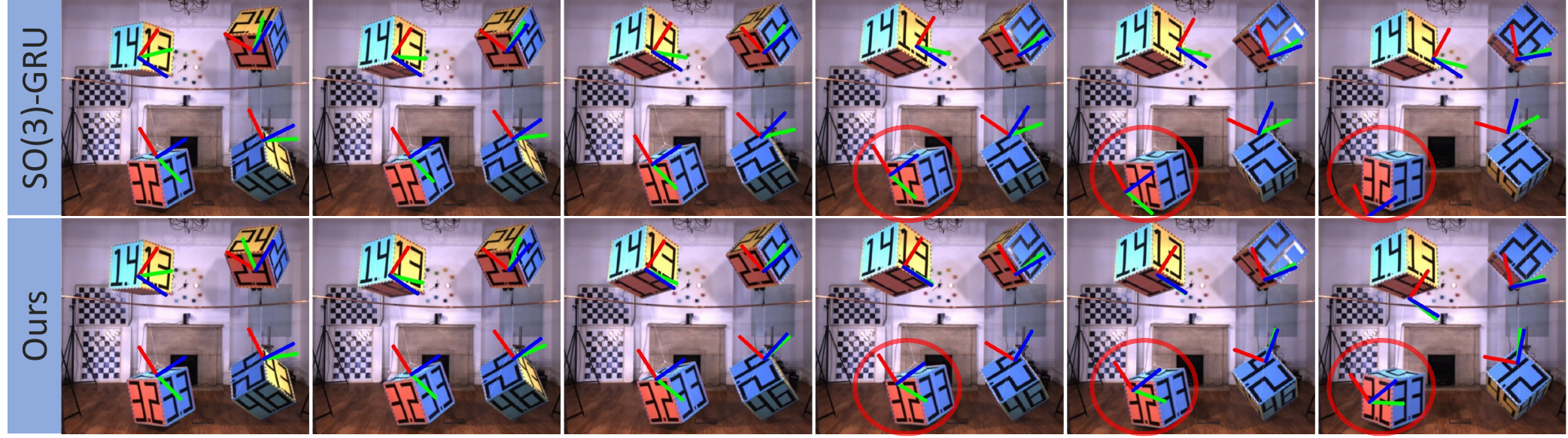}
    \caption{Qualitative comparison of the extrapolation results (left to right) produced by our method (top row) and the \SO-GRU baseline (bottom row) on the OMD dataset. Methods are conditioned on the preceding timesteps (not visualized). Rotation is represented using 3D axis coordinates. 
    Ground truth observations relative to the world coordinate system are used for the translation component. \vspace{-5.0mm}}
    \label{fig:OMD_qualitative}
\end{figure*}

\paragraph{Baselines}
We compare our method with the Conservation of Energy approach for \SO~prediction \cite{mason2023learning}, \SO-GRU and spline \SO-nCDE baselines we introduce, and the learnable path CDE (LEAP) \cite{jhin2023learnable} on the 10 scenarios described in \cref{sec:dataset_generation}.
As the conservation of energy method~\cite{mason2023learning} solves an initial value problem, we provide the additionally required angular momentum estimate to integrate forward in time from the $\tilde{\mathbf{x}}_n$.
Momentum estimates $\tilde{\mathbf{L}}$ are sampled from a noise distribution proportional to the noisy state estimates $\tilde{\mathbf{x}}$.
Details for baseline models can be found in \App B.2.

\paragraph{Implementation Details}
We use Python version 3.12, PyTorch \cite{paszke2019pytorch} for data simulation, and diffrax \cite{kidger2021on} for numerical integration of our method using Dormand-Prince 4/5 to integrate the dynamical systems.
Experiments are run on an RTX 5000 GPU.
Adam is employed as an optimizer with a learning rate of $5 \times 10^{-3}$.

\paragraph{$\mathbf{SO(3)}$ Trajectory Visualizations}
Quaternions offer a convenient visualization method as they can be reduced to three dimensions while preserving both the angle and axis of rotation.
Let $\mathbf{q} = [cos(\theta/2), sin(\theta/2)\mathbf{u}]$ be a quaternion where $\theta$ is the rotation angle, and $\mathbf{u}$ is the unit vector representing the axis of rotation (the axis-angle representation).
By normalizing the last three coordinates, we project trajectories onto the unit sphere $\mathbb{S}^2$.
Visualizations of longer forecasts of 1.2 seconds (during training, the model only sees 0.8s) are depicted on $\mathbb{S}^2$ in \cref{fig:trajectory} for the \textit{velocity damping} scenario.

\subsection{Analysis \& Applications}
\label{sec:experimental_main}
The results for simulated experiments are depicted in \cref{table:main_results}.
LEAP \cite{jhin2023learnable} generally does not successfully reconstruct or forecast \SO~trajectories.
Nevertheless, this motivates using a robust filtering approach for the CDE instead of learning the path in an unstructured way.
On the other hand, the baseline \SO~spline neural CDE predicts plausible trajectories, particularly within the training horizon of $t=0.8s$.
The \SO-GRU performs competitively in several cases, particularly with dissipative forces.
Our proposed method tends to perform the most consistently.

We observe the GRU and our method predictions are competitive beyond the training horizon (see \cref{fig:trajectory}). 
At the same time, the \SO-nCDE baseline and LEAP tend to diverge sharply after several predicted timesteps.
The conservation approach \cite{mason2023learning} predicts plausible trajectories, yet visualized trajectories in non-conservational cases confirm that external forces are not accounted for.
In \App B.4 we show that even with ground truth momentum estimates, the trajectores diverge for non-conservational cases.

\begin{table}[t]
\resizebox{0.99\columnwidth}{!}{
   \setlength{\tabcolsep}{7pt}
   \small
   \centering
  \begin{tabular}{l|ccc}
    \toprule
    Method & \makecell{Translation\\ Motion} & \makecell{Unconstrained\\ Motion} & \makecell{Static\\ Motion} \\
    \midrule
    \SO-nCDE & $6.49 \pm 9.42$ & $5.43 \pm 7.09$ & $7.82 \pm 12.19$ \\
    \SO-GRU & \textcolor{blue}{$3.04 \pm 1.29$} & \textcolor{blue}{$2.90 \pm 1.25$} & \textcolor{blue}{$2.83 \pm 1.26$} \\
    SG-nCDE (Ours) & \boldmath \textbf{$2.32 \pm 1.03$} & \boldmath \textbf{$2.30 \pm 1.02$} & \boldmath \textbf{$2.18 \pm 1.00$} \\
    \bottomrule
  \end{tabular}%
  \vspace{-7mm}
  }
  \caption{Trajectory prediction error (degrees) across different motions in the OMD dataset \cite{judd2019oxford}. We evaluate methods in 3 different scenarios (translational, unconstrained, and static). \textbf{Best} and \textcolor{blue}{second-best} results are emphasized. \vspace{-3mm}}
  \label{table:omd_dataset}
\end{table}

\paragraph{Sim2Real: Unseen Dynamics and Noise}
We evaluate the capabilities of models trained on the \textit{Variable Dynamics} scenario in \cref{sec:experimental_main} on trajectories in the Oxford Motion Dataset (OMD)~\cite{judd2019oxford} consisting of observations of four swinging boxes for three different camera motions (translational, unconstrained, and static).
Vicon tracking data is downsampled to 40Hz, and models are evaluated on predicting 0.3 seconds.
Statistics for results aggregated over the four boxes in each scene are depicted in \cref{table:omd_dataset}.
We observe that SG-nCDEs consistently outperform despite the diverse motions present. 

We additionally present a qualitative comparison of extrapolation performance in \cref{fig:OMD_qualitative}, highlighting the differences in extrapolation quality between our method and the \SO-GRU baseline.
As we represent translation with respect to the world coordinate system, errors accrued in rotation estimation manifest as offsets in pixel space.
This is apparent later in the image sequence, where the predicted rotations from the \SO-GRU baseline cause the projected coordinate system to drift significantly. 
The boxes have different translational velocities, with the two right-most boxes (\#2 and \#4) yielding the most stable overall predictions.

\begin{figure*}[t!]
    \vspace{-3mm}
    \includegraphics[width=\textwidth]{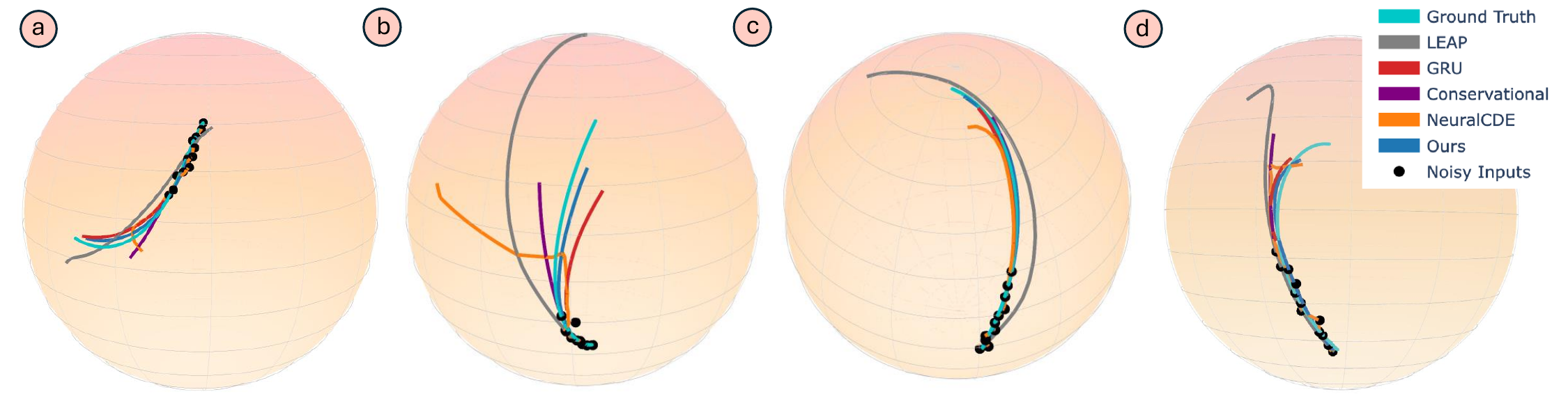}
    \caption{Trajectory extrapolations visualized on the unit sphere $\mathbb{S}^2$ for the proposed method (blue) and the relevant baselines on the \textit{damped} scenario. We additionally depict the ground truth (teal) and the noisy state estimates (black) provided as input for reference. Trajectories are visualized for $t=1.2s$ future timesteps. \vspace{-3mm}}
    \label{fig:trajectory}
\end{figure*}

\paragraph{Dynamic Pose Estimation}
To gauge the performance in conjunction with 6D object pose estimators, we apply our method to the outputs of the widely used GDR-NPP~\cite{wang2021gdr,liu2022gdrnpp_bop}.
We fine-tune the network weights published by the authors on a single object and apply the pose estimation to a sequence of (unseen) simulated images depicted in~\cref{fig:object_pose}, with detailed comparisons in \App A.1.
In this way, dynamics estimation can be decoupled from object pose estimation, mitigating the cost of annotating dynamic objects.

\paragraph{Ablation Study}
We analyze the performance of the baseline \SO-nCDE with our proposed modifications in \cref{table:ablation} in the \textit{freely rotating} scenario.
This includes Savitzky-Golay filtering as a control signal for the CDE, iterative denoising, the second order control signal, and learning the weight matrix $\mathbf{W}$, which all notably increase performance over the baseline model.
Notably, a naive adaptation of the Savitzky-Golay filter does not initially improve performance due to boundary artifacts \cite{schmid2022and}; the strong performance of our method is a culmination of the possibilities opened up by using robust control signals together with neural CDEs.

\paragraph{Number of Function Evaluations (NFE)}
NFE assesses computational efficiency in differential equation solvers correlating with the complexity of the numerical integration. 
We observe that SG-nCDEs have lower and more consistent NFE counts throughout the training process; our 2nd order model requires around half as many function evaluations, while the first-order filter model requires more than an order of magnitude less than the baseline \SO-nCDE. 
This implies the that Savitzky-Golay filtering yields a more easily numerically integrable function in \cref{def:neural_cde}.

\begin{figure}[t]
    \vspace{-3mm}
    \centering
    \includegraphics[width=0.95\columnwidth]{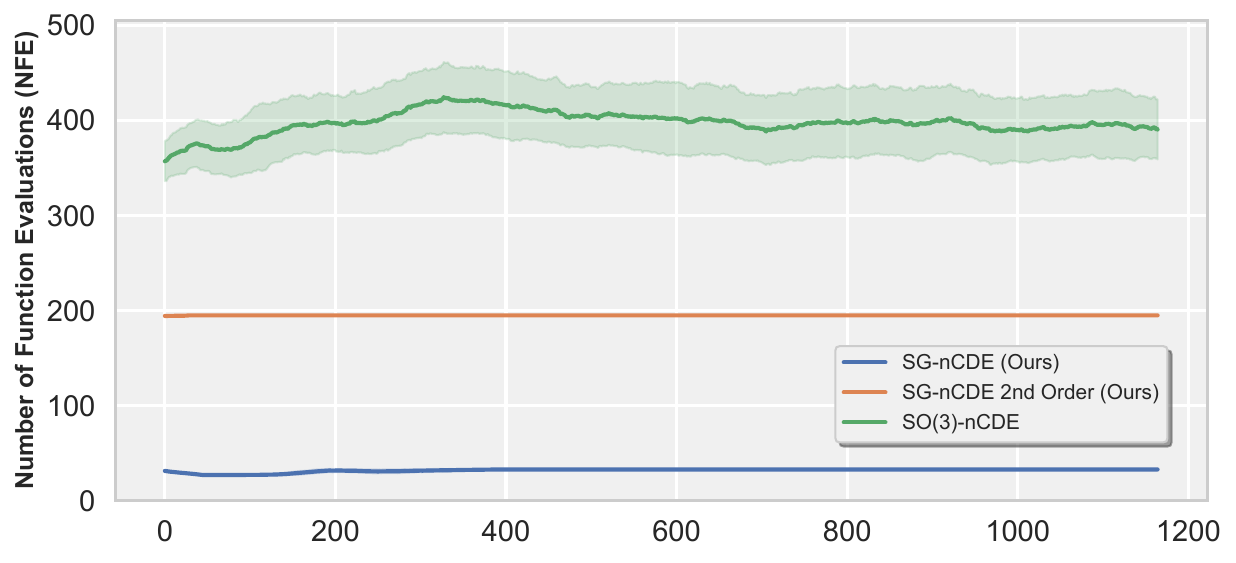}
    \vspace{-1mm}
    \caption{Comparison of the number of function evaluations (NFE) per training step between the baseline model and our model. The shaded areas represent one standard deviation.}
    \label{fig:nfe_comparison_plot}
    \vspace{-6mm}
\end{figure}

\begin{table}[t]
\vspace{-1.5mm}
    \resizebox{0.47\textwidth}{!}{
    \centering
    \setlength{\tabcolsep}{4pt}  %
    \begin{tabular}{l|ccc|c|c}
    \toprule
    & \makecell[c]{Robust SG \\ Regression}
    & \makecell[c]{2nd-Order \\ Control Signal}
    & \makecell[c]{Adaptive \\ $\mathbf{W}$}
    & \makecell[c]{Freely Rotating \\ (RGE)}
    & \makecell[c]{Runtime \\ (ms)} \\
    \midrule
    i &  &  &  & $3.62\pm0.24$ & $74.6 \pm 0.7$\\
    ii & \checkmark &  &  & $1.91\pm0.12$ & $60.9 \pm 1.4$\\
    iii & \checkmark &  & \checkmark & $1.98\pm0.13$ & $60.9 \pm 1.2$\\
    iv & \checkmark & \checkmark &  & $1.31\pm0.08$ & $74.5 \pm 1.1$\\
    v & \checkmark & \checkmark & \checkmark & $1.28\pm0.08$ & $76.8 \pm 1.2$\\
    \bottomrule
    \end{tabular}
    }
    \caption{Ablating our modeling components and runtime. We study the robust Savitzky-Golay (SG) regression, adaptive weighting $\mathbf{W}$, and the 2nd-order CDE. (i) represents the \SO-nCDE baseline with no other modifications. Extrapolation is evaluated in rotational geodesic error at 1.2s on the freely rotating scenario.}
    \label{table:ablation}
    \vspace{-4mm}
\end{table}

\paragraph{Runtime}
As evident in \cref{table:ablation}, SG-nCDEs introduce marginal additional computational overhead. 
While these techniques improve accuracy, practitioners should weigh the performance tradeoffs against their application requirements.

\vspace{-1mm}\section{Conclusion}
We propose a method for extrapolating rotational motion for dynamical systems in \SO~with unknown physical parameters.
This problem is challenging and understudied; previous works either consider pairwise predictions or assume energy conservation, neglecting long-term forecasting in \SO.
Our method achieves more accurate predictions than existing works by regressing a robust control path directly on the manifold of rotations \SO.
This path is used as a control signal for a neural CDE modeling the evolving dynamical system.
Our approach outperforms existing methods in accuracy and extrapolation robustness across several settings.

\paragraph{Limitations and future work}
Our approach is limited to the special orthogonal group \SO, whereas the group of rigid motions \SE~also involves translational components. 
As the extrapolation of Euclidean quantities has been extensively studied, extending our framework to SE(3) presents an interesting opportunity for future work.
Particularly exciting is the study of coupled motion between rotation and translation (e.g., centripetal and Coriolis forces) and filtering in the Lie algebra of twists.
Nevertheless, robust filtering on \SE~risks inadvertently decoupling rotation and translation \cite{sommer2020efficient,wang2008computation,mankovich2023chordal}, posing problems when studying gyroscopic coupling effects.
In the future, we will also explore different path parameterizations.

\clearpage

\paragraph{Acknowledgements}
The authors would like to thank Junwen Huang and Benjamin Busam for fruitful early discussions.
The authors are grateful for the support of the Excellence Strategy of German local and state governments, including computational resources of the LRZ AI service infrastructure provided by the Leibniz Supercomputing Center (LRZ), the German Federal Ministry of Education and Research (BMBF), and the Bavarian State Ministry of Science and the Arts (StMWK).
T. Birdal acknowledges support from the Engineering and Physical Sciences Research Council [grant EP/X011364/1]. 
T. Birdal was supported by a UKRI Future Leaders Fellowship [grant MR/Y018818/1].

{\small
\bibliographystyle{ieeenat_fullname}
\bibliography{references}
}

\clearpage
\onecolumn

\begin{center}
  \Large\bf \paperTitle\\
  \vspace{0.5em}
  \Large\bf Appendix
  \vskip 1.5em

  \begin{tabular}{c}
    \large
    Lennart Bastian\textsuperscript{1,2*} \quad
    Mohammad Rashed\textsuperscript{1,2*} \quad
    Nassir Navab\textsuperscript{1,2} \quad
    Tolga Birdal\textsuperscript{3} \\
    \normalsize
    \textsuperscript{1} Technical University of Munich \;
    \textsuperscript{2} Munich Center of Machine Learning \;
    \textsuperscript{3} Imperial College London \\
  \end{tabular}
  \vskip 2em
\end{center}

\renewcommand{\contentsname}{Appendix Contents}
\begin{center}
{\Large \bf Appendix Contents}
\end{center}
\vskip 1em

{\small
\setlength{\parindent}{0pt}
\setlength{\parskip}{3pt}

\textbf{\hyperref[sec:appendix-experiments]{A \quad Experiments and Ablations}} \dotfill \pageref{sec:appendix-experiments}

\hspace{0.5cm} \hyperref[subsec:6d-pose-estimation]{A.1 \quad 6D Pose Estimation} \dotfill \pageref{subsec:6d-pose-estimation}

\hspace{0.5cm} \hyperref[subsec:sensor-fusion]{A.2 \quad Application: Irregularly Sampled Sensor Fusion} \dotfill \pageref{subsec:sensor-fusion}

\hspace{0.5cm} \hyperref[subsec:hyperparams]{A.3 \quad Hyperparameters} \dotfill \pageref{subsec:hyperparams}

\hspace{0.5cm} \hyperref[subsec:integration-path]{A.4 \quad On the Choice of Integration Path} \dotfill \pageref{subsec:integration-path}

\hspace{0.5cm} \hyperref[subsec:noise-performance]{A.5 \quad Model performance vs Input Noise} \dotfill \pageref{subsec:noise-performance}

\textbf{\hyperref[sec:appendix-implementation]{B \quad Implementation Details}} \dotfill \pageref{sec:appendix-implementation}

\hspace{0.5cm} \hyperref[subsec:experimental-design]{B.1 \quad Experimental Design} \dotfill \pageref{subsec:experimental-design}

\hspace{0.5cm} \hyperref[subsec:inertia-distributions]{B.2 \quad Moment of Inertia Distributions for Simulation} \dotfill \pageref{subsec:inertia-distributions}

\hspace{0.5cm} \hyperref[subsec:simulation-scenarios]{B.3 \quad Simulation Scenarios} \dotfill \pageref{subsec:simulation-scenarios}

\hspace{0.5cm} \hyperref[subsec:baseline-design]{B.4 \quad Baseline Model Design} \dotfill \pageref{subsec:baseline-design}

\textbf{\hyperref[sec:appendix-background]{C \quad Additional Background}} \dotfill \pageref{sec:appendix-background}

\hspace{0.5cm} \hyperref[subsec:inertia-diagonalization]{C.1 \quad Moment of Inertia and Diagonalization} \dotfill \pageref{subsec:inertia-diagonalization}

\hspace{0.5cm} \hyperref[subsec:exp-log-maps]{C.2 \quad Exponential and Logarithmic Maps} \dotfill \pageref{subsec:exp-log-maps}

\hspace{0.5cm} \hyperref[subsec:savitzky-golay]{C.3 \quad SO(3) Savitzky-Golay Filtering} \dotfill \pageref{subsec:savitzky-golay}

\textbf{\hyperref[sec:appendix-proof]{D \quad Proof of Proposition 10}} \dotfill \pageref{sec:appendix-proof}

}

\vskip 2em
\setcounter{footnote}{0}%

\appendix
\clearpage
\section{Experiments and Ablations}\label{sec:appendix-experiments}

\begin{figure*}[t!]
    \includegraphics[width=\textwidth]{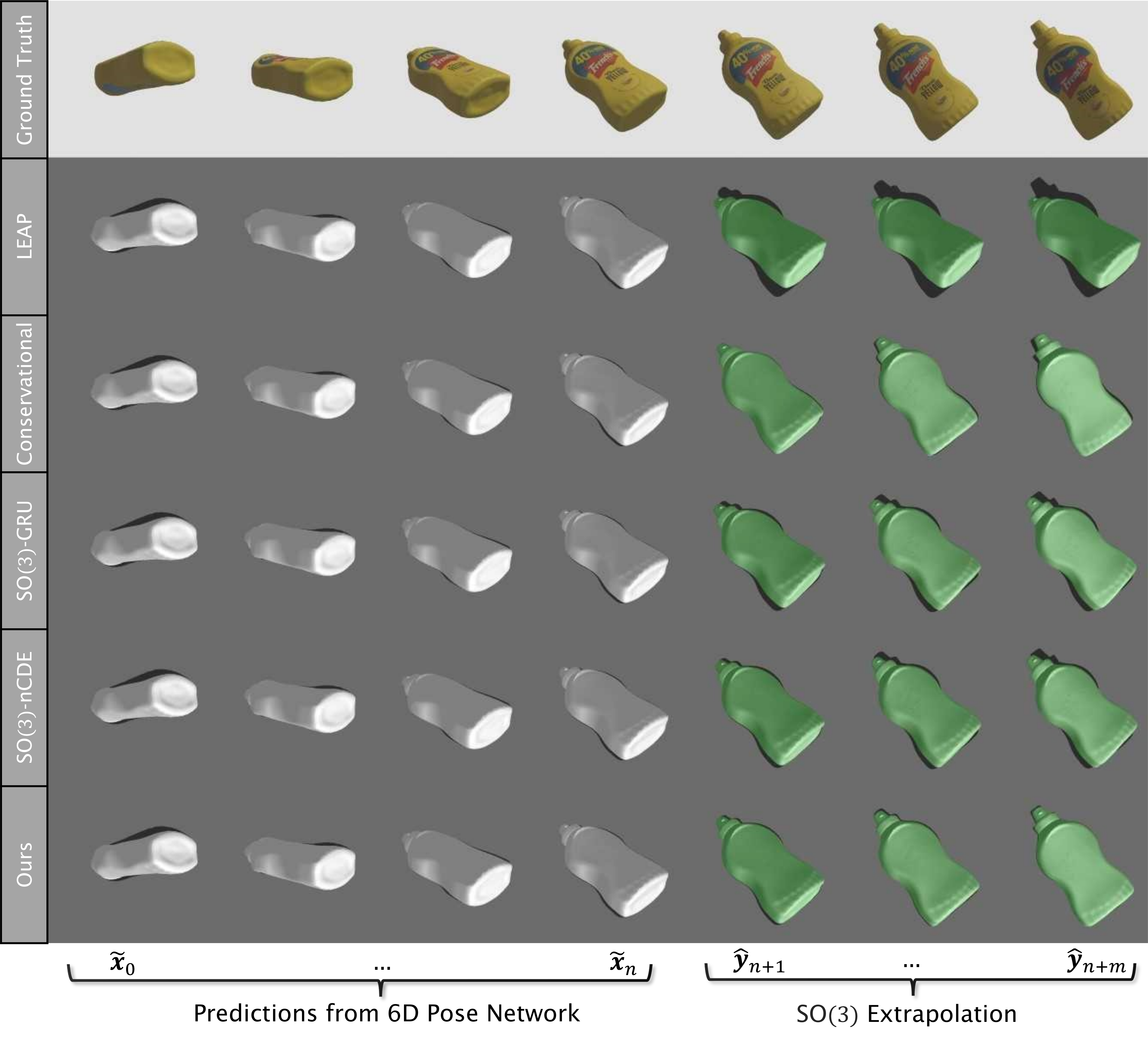}
    \caption{Application: 6D object pose estimation. We compare our method with the baselines by applying all models on the pose prediction outputs of GDRNet~\cite{wang2021gdr}. 6D pose predictions (bottom left) are used as input to the \SO extrapolation methods. Extrapolated outputs are depicted in green. Our method is able to predict more robust future states based on noisy pose estimates.}
    \label{fig:object-pose-comparison-all-models}
\end{figure*}

\subsection{6D Pose Estimation}\label{subsec:6d-pose-estimation}
In this section, we provide additional details regarding the 6D Pose estimation application introduced in the main paper. 
A total of 25k images are generated using BlenderProc2~\cite{Denninger2023}. 
The synthetic scenes featured a single object (a mustard bottle) from the YCB-Video dataset~\cite{xiang2018posecnn}, with the object's orientation randomly sampled. 
We then train GDRNet ~\cite{wang2021gdr} on individual (non-sequential) RGB images. 
The publicly available pre-trained weights are fine-tuned for one training epoch, without data augmentations.

Figure \ref{fig:object-pose-comparison-all-models} illustrates simulated trajectories of the object for the configuration-dependent torque scenario. 
In this experiment, image sequences are given as input to the pose estimator, producing noisy \SO~rotation trajectories.
These are then used as input for the \SO~extrapolation methods.
We compare our method to the baselines LEAP, Conservational, \SO-GRU, and \SO-nCDE. 

As shown in Figure \ref{fig:object-pose-comparison-all-models}, the LEAP baseline struggles to handle the rotational behavior effectively. 
The conservational approach effectively predicts rotation within this time horizon but does not capture the damping behavior and overshoots slightly. 
While the \SO-GRU and \SO-nCDE baselines perform better, managing to account for the damped patterns, the extrapolated poses produced by these methods exhibited higher errors than our approach.

\subsection{Application: Irregularly Sampled Sensor Fusion}\label{subsec:sensor-fusion}

\begin{wrapfigure}{r}{0.375\textwidth}
    \centering
    \vspace{-5.0mm}
    \includegraphics[width=\linewidth]{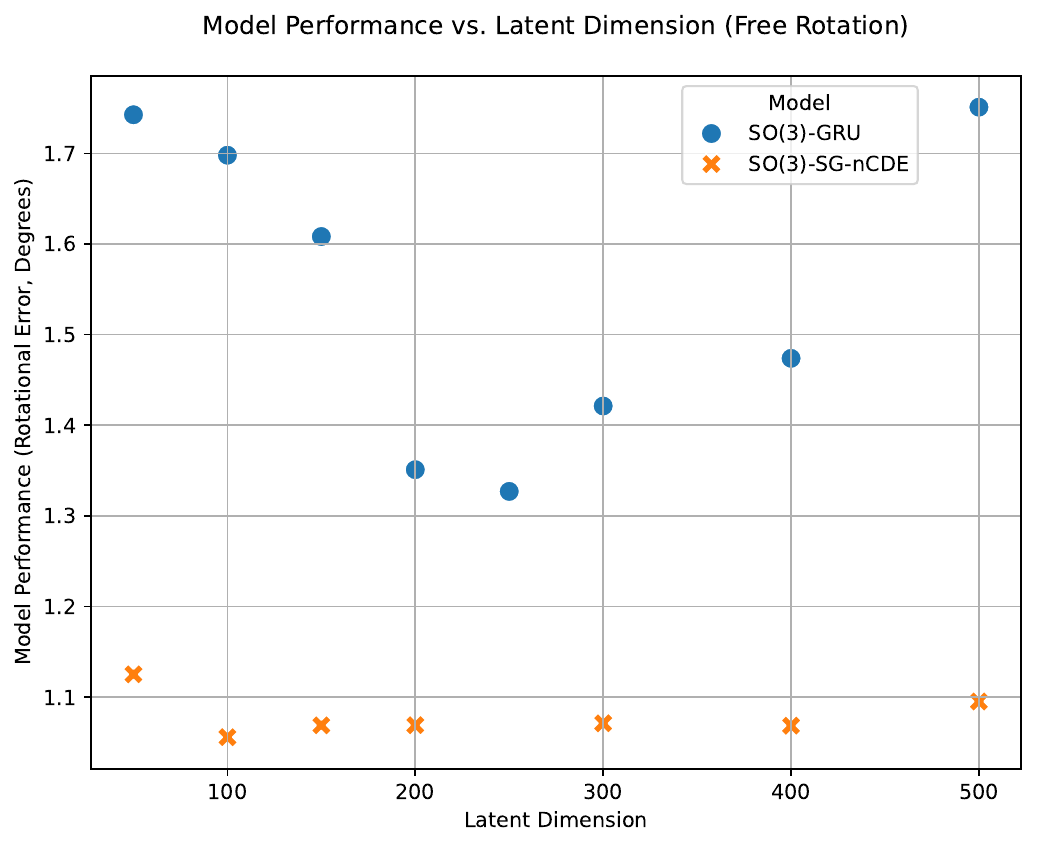}
    \caption{Ablating latent dimension vs. prediction performance in GRU vs. neural CDEs on the \textit{freely rotating} scenario with $\delta = 0.05$ and a prediction horizon of $t=0.8s$.}
    \label{fig:ablation_hyperparams}
    \vspace{-5.0mm}
\end{wrapfigure}

To evaluate the capabilities of the proposed method in real-world scenarios, we study the case of extrapolation in a sensor fusion application.
The data rig consists of a tablet depicting an ArUco \cite{garrido2014automatic} marker that streams the onboard inertial measurement unit (IMU) sampled at 100Hz sensor signal to a system that simultaneously captures the 6D pose of the tablet with a camera (sampled at 30Hz) as shown in figure \cref{fig:sensor_fusion}. 
The sensor is then moved manually and tracked, a particularly challenging (and potentially ill-defined) scenario as stochastic external torques can influence the device during the extrapolation period.
Sensors are calibrated spatiotemporally using a Levenberg-Marquardt optimization, after which we perform outlier rejection based on a simple median-absolute-deviation heuristic.
We then capture 24 trajectories with the sensor rig to evaluate the models trained on the simulated \textit{combined external torque} scenarios.
The trajectories from the two sensors are naively merged according to the registered timestamps and then used as input to the models.

\begin{figure*}[t]
    \centering
    \includegraphics[width=\textwidth]{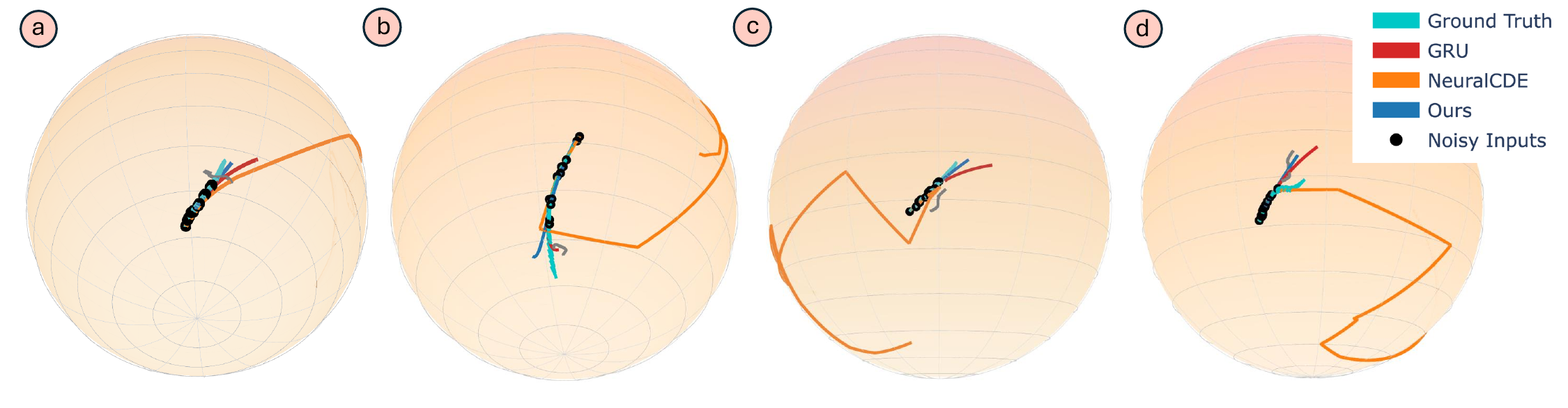}
    \caption{We evaluate the best-performing models' extrapolation capabilities under irregular sampling from noisy samples fused from multiple sensors. The trajectories are visualized on the unit sphere $\mathbb{S}^2$.}
    \label{fig:irregular_sampling}
    
    \vspace{1em} %
    
    \begin{tabular}{@{}p{0.58\textwidth}@{\hspace{0.04\textwidth}}p{0.38\textwidth}@{}}
    
    \begin{minipage}{\linewidth}
        \includegraphics[width=\linewidth]{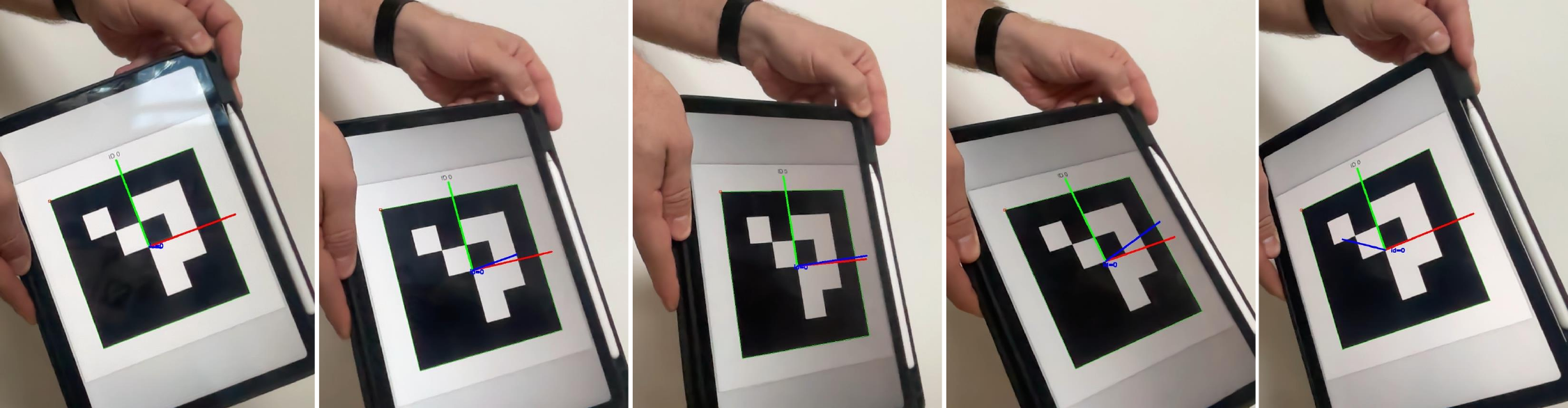}
        \caption{Evaluating the robustness of the proposed extrapolation methods for irregularly sampled extrapolation in the wild. The IMU signal from a tablet is synchronized with an external camera which captures the pose of a displayed ArUco marker. The resulting noisy and irregular trajectories are used to evaluate the models' extrapolation capabilities.}
        \label{fig:sensor_fusion}
    \end{minipage}
    
    & %
    
    \begin{minipage}{\linewidth}
        \captionof{table}{Comparison of prediction errors across different models from irregularly sampled noisy sensor measurements. Results show mean ± standard deviation in rotational geodesic error (RGE).}
        \label{table:irregular_sampling}
        \centering
        \vspace{1em} %
        \setlength{\tabcolsep}{4pt}
        \resizebox{0.9\columnwidth}{!}{
        \begin{tabular}{l|c}
        \toprule
        Model & Prediction Error \\
        \midrule
        LEAP & $18.30 \pm 5.45$ \\
        \SO-GRU & $11.39 \pm 0.73$ \\
        \SO-nCDE & $75.48 \pm 9.39$ \\
        Ours & $\mathbf{8.11 \pm 1.59}$ \\
        \bottomrule
        \end{tabular}
        }
    \end{minipage} \\
    
    \end{tabular}
\end{figure*}

\begin{figure}[t]
    \includegraphics[width=0.99\textwidth]{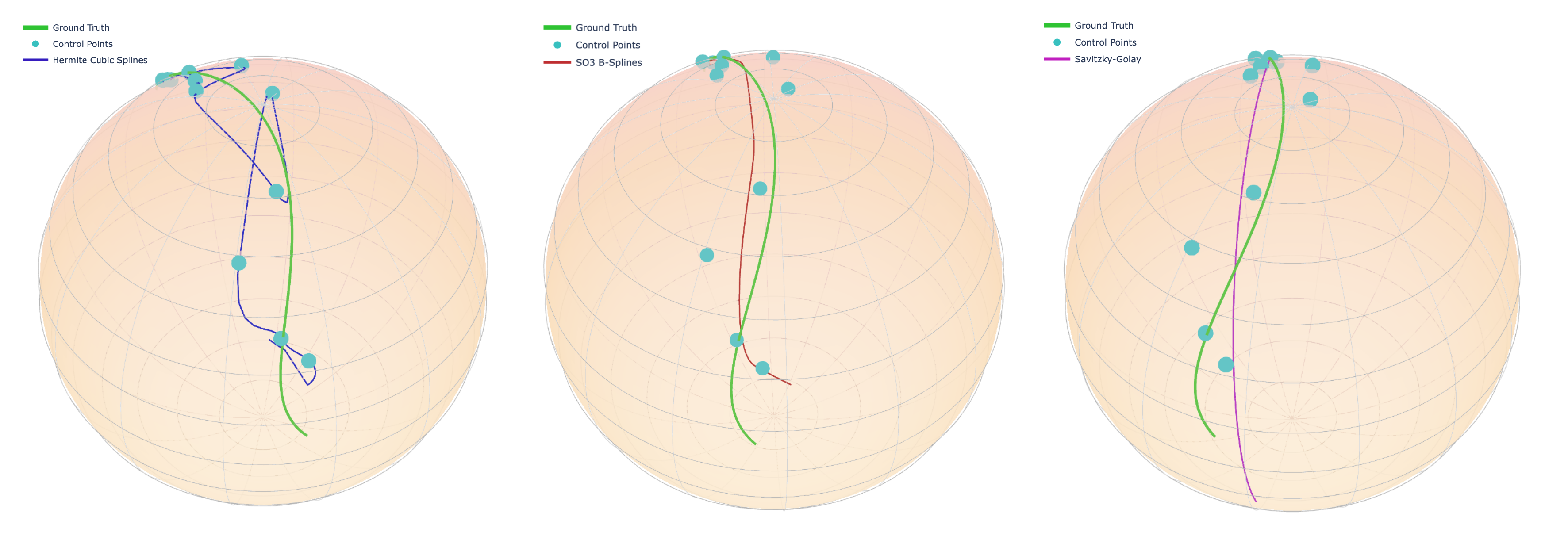}
    \vspace{-2mm}
    \caption{Comparison of interpolation methods on a particularly complex and noisy trajectory from the \textit{combined external force} scenario. From left to right: Hermite cubic splines with backward differences \cite{morrill2022choice}, \SO~B-splines \cite{sommer2020efficient}, and \SO~Savitzky Golay Filtering \cite{jongeneel2022geometric}.
    Trajectories represent interpolation and $t=0.2s$ extrapolation into the future.
    }
    \label{fig:spline_comparison}
\end{figure}

\noindent\textbf{Results} The results are depicted in \cref{table:irregular_sampling}, with visualizations in \cref{fig:irregular_sampling}.
Our method outperforms the baselines LEAP and \SO-nCDE by a large margin. 
In contrast to the simulated scenarios, the latter cannot accommodate the irregular, noisy signals and creates errant extrapolations.
The GRU performs competitively, despite the irregular sampling, and produces reasonable estimates.
\cref{fig:irregular_sampling} (d) depicts an example where the applied torque changes in a stochastic manner during the extrapolation horizon, which the models are not able to capture as they have no knowledge of such events.

\subsection{Hyperparameters}\label{subsec:hyperparams}
As the latent dimension significantly impacts model performance for both the GRU and nCDE-based methods, we provide an additional ablation over the latent dimension (see \cref{fig:ablation_hyperparams}).

While increasing the latent dimension of the GRU leads to overfitting, \SO-SG-nCDEs are robust to changes in latent dimension and overfitting.
\SO-nCDEs and \SO-SG-nCDEs are trained with a latent dimension of 125, while GRUs are trained with a latent dimension of 250 and 3 hidden units.
LEAP uses a latent dimension of 50; increasing the latent dimension for this method significantly increases runtime and VRAM but not performance.

\subsection{On the Choice of Integration Path}\label{subsec:integration-path}

Various control signals for neural CDEs have been proposed; most interpolation methods use cubic or higher order splines~\cite{kidger2020neural,morrill2022choice}. 
Kidger et al.~\cite{kidger2020neural} construct the integration path $\Xpath \in \mathcal{C}^2$ as a natural cubic spline over the input sequence.
While this guarantees smoothness of the first derivative, Hermite cubic splines with backward differences are preferred for CDE integration due to local control \cite{morrill2022choice}.
We observe that these choices do not obey the geometric structure of \SO; furthermore, they are not robust with respect to sensor noise interpolating the points erratically (see \cref{fig:spline_comparison}, left).
In general, such higher-order polynomials tend to diverge near the interpolation endpoints and must be extrapolated with some assumption, for instance, constant velocity \cite{persson2021practical}.
However, when dynamics are complicated, this assumption is overly simplifying, even more so than assuming conservation of energy as in \cite{mason2023learning}.

Efficient derivative calculations have been proposed for robust \SO~B-spline interpolation of trajectories in \SO~\cite{sommer2020efficient}.
These can be made suitable for extrapolation with, e.g., constant velocity extrapolation at the endpoint by constructing a nonlinear optimization to minimize residuals to the measurements, with additional smoothness near the endpoint.
We implement this in Ceres~\cite{Agarwal_Ceres_Solver_2022}, Basalt~\cite{usenko2018double}, and Sophus~\cite{sophus} to integrate Lie group constraints into the manifold optimization.
We use the efficient derivative computation of \cite{sommer2020efficient} and an additional first-order Taylor expansion using the computed Jacobians to include a constant velocity endpoint constraint defining the splines beyond their support interval.
While the splines nicely interpolate the noisy trajectory and can be used to extrapolate (see \cref{fig:spline_comparison}, center), they still exhibit bias towards endpoints.
Moreover, this approach is too computationally expensive to use during network training.
On a 24-Core Intel(R) Xeon(R) CPU, a single batch of 1000 trajectories takes $\approx 22$ seconds when maximally parallelized, a fraction of the trajectories we use for training.

On the other hand, the Savitzky-Golay filter provides a reasonable approximation at a low cost. 
As it merely requires solving a modest linear system, this means it can be differentiated through to learn a robust weighting suitable for extrapolation (see \cref{fig:spline_comparison}, right).

\subsection{Model performance vs Input Noise}\label{subsec:noise-performance}

In \cref{fig:noise_ablation}, we further evaluate the quality of the predictions over varying time horizons (1, 4, 6, 8) and level of noise $[0.02, 0.03, 0.04, 0.05]\pi$ on the \textit{velocity damping} scenario.
The results are depicted in \cref{fig:noise_ablation}.
Notably, all methods suffer from increased prediction errors as noise increases, but our method consistently outperforms the baselines.

\begin{figure}[t]
    \includegraphics[width=0.99\textwidth]{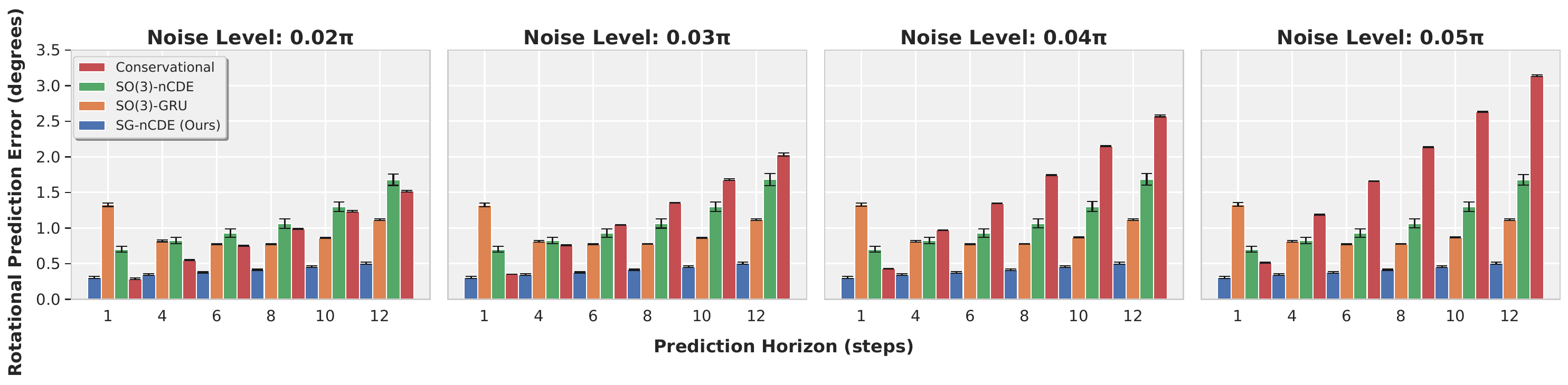}
    \vspace{-2mm}
    \caption{Rotational geodesic error for different input noise levels on the velocity damping scenario. We compare the performance of the baseline neural CDE and our best performing method (weighted, 2nd-order) with different amounts of simulated noise added to the simulated dynamical trajectories. Noise is sampled from $\mathcal{N}(0, \delta)$ with $\delta \in [0.02, 0.05]\pi$.
    }
    \label{fig:noise_ablation}
\end{figure}

\section{Implementation Details}\label{sec:appendix-implementation}

We provide further details regarding the five simulated experimental scenarios described in the main paper.

\subsection{Experimental Design}\label{subsec:experimental-design}
\label{appendix:dataset_simulation}

For each experiment, we sample an initial orientation according to the strategy in \cite{shoemake1992uniform}, and an initial angular velocity is sampled according to a truncated normal distribution to avoid degenerate cases with a small angular velocity. 

We sample via rejection sampling:
\begin{align*}
& \quad \text{1. Sample } X \sim \mathcal{N}(0, \sigma) \\
& \quad \text{2. Accept if } |X| > \eta, \text{ otherwise repeat step 1},
\end{align*}

\noindent with $\sigma = 0.3$ and $\eta = 0.1$

\subsection{Moment of Inertia Distributions for Simulation}\label{subsec:inertia-distributions}
\label{sec:inertia_distribution}

To evaluate model generalization across different rigid body properties, we simulate trajectories using four distinct moment of inertia (MOI) distributions. Each distribution is characterized by a base diagonal MOI tensor and additive noise:

\begin{equation}
\mathbf{J} = \text{diag}(\mathbf{J}_{\text{base}}) + \boldsymbol{\epsilon}, \quad \boldsymbol{\epsilon} \sim \mathcal{N}(0, \sigma^2\mathbf{I}_3)
\end{equation}

where $\mathbf{J}_{\text{base}}$ is the base MOI vector and $\sigma = 0.2$ is the noise standard deviation. 
The four distributions use the following base MOI configurations:

\begin{align}
\mathbf{J}_{\text{base}}^{(1)} &= [1.0, 2.0, 3.0] \\
\mathbf{J}_{\text{base}}^{(2)} &= [3.0, 1.0, 2.0] \\
\mathbf{J}_{\text{base}}^{(3)} &= [3.0, 2.0, 1.0] \\
\mathbf{J}_{\text{base}}^{(4)} &= [2.0, 3.0, 1.0]
\end{align}

These distributions represent different permutations of the same eigenvalues, creating rigid bodies with varied principal axis orientations. 
This approach challenges models to generalize across different rotational behaviors resulting from the same underlying physics but with different inertial configurations.

\subsection{Simulation Scenarios}\label{subsec:simulation-scenarios}
\label{sec:simulation scenarios}

\paragraph{Freely Rotating}
Freely rotating objects are governed by kinematic equations. Without any external torques, they reduce to the following: 
\begin{equation}
\bJ \dot{\bomega} = - \bomega \times \bJ \bomega
\end{equation}

While this equation is simplified and maintains energy conservation, the gyroscopic torque defined through the cross-product is highly non-linear and can lead to chaotic motion depending on the mass distribution.

\paragraph{Linear Control} In the case of having an identity moment of inertia matrix, the gyroscopic torque can be neglected, and the kinematic equations
\begin{align}
\bJ \dot{\bomega} + \bomega \times \bJ \bomega = \tau_{\text{ext}}
\end{align}
reduce to:
\begin{align}
\bJ \dot{\bomega} = \tau_{\text{ext}}
\end{align}

To this end, we simulate a linear controller where all acceleration is \textit{driven} by an external torque which changes according to the linear control law $\tau_{\text{ext}} = \bJ (\bA \bomega + \bb)$ where $A \in \mathbb{R}^{3\times3}$ and $b \in \mathbb{R}^3$.

By defining $\tau_{ext}$ in this manner, we see that the angular velocity changes according to the external control signal directly as a function of orientation: $\dot{\bomega} = (\bA \bomega + \bb)$, which is a useful simplified external torque without highly non-linear gyroscopic effects.
Notably, the baseline \SO-nCDE and \SO-GRU also perform competitively in this scenario due to it's simplicity.

\paragraph{Configuration Dependent Torque}
We further consider scenarios where the external torque depends explicitly on the current orientation, leading to
\begin{equation}
\bJ \dot{\bomega} + \bomega \times \bJ \bomega = \tau_{\text{ext}}(R)
\end{equation}
where $R \in \SO$ is the current orientation. 
This class of torques arises in various physical systems where the interaction with external fields generates configuration-dependent forcing, such as magnetic dipoles in uniform fields or gravitational gradients. 

Unlike the linear control case, the rotational motion is coupled directly to the orientation state in a global reference frame, potentially leading to multiple equilibria and complex trajectories. 
The external torque takes the general form $\tau_{\text{ext}}(R) = f(R\mathbf{v})$ where $\mathbf{v} \in \mathbb{R}^3$ represents some body-fixed vector and $f: \mathbb{R}^3 \to \mathbb{R}^3$ is a nonlinear function that describes the physical interaction in the world frame.
We observe this scenario to be most challenging for the models as the trajectories can exhibit erratic motion.

To limit chaotic behavior, we additionally apply a damping signal (e.g., simulating friction or other dissipative forces). 
We initialize the damping matrix $\bD$ to be negative definite such that $\lim_{t \to \infty} \|{\bomega}(t)\| = 0$ while the object is also under the effect of the internal gyroscopic motion:

\begin{equation}
\bD = \begin{bmatrix}
    -0.2 & 0 & 0 \\
    0 & -0.2 & 0 \\
    0 & 0 & -0.2
\end{bmatrix}
\end{equation}

These are combined for the following total external torque:
 
\begin{equation}
\tau_{\text{ext}} = w_1\tau_{\text{config}}(R) + w_2\tau_{\text{damp}}(\bomega)
\end{equation}

\noindent where the weights $w_i$ allow for scaling individual contributions. 
Despite combining multiple effects, these scenarios often exhibit more predictable behavior than pure configuration-dependent cases, as damping terms provide a stabilizing effect by dissipating energy from the system.

\subsection{Baseline Model Design}\label{subsec:baseline-design}

We provide additional details regarding the baseline models.
Specifically, how we adapted the \SO-GRU and \SO-nCDE baselines to predict rotations in \SO.
We also detail the essential components of the conservation approach \cite{mason2023learning} that are used to predict rotations directly without confounding pose estimation from images with object dynamics.

\paragraph{\SO-nCDE}
We adapt the neural CDE baseline from \cite{kidger2020neural} to predict rotations in \SO.
Hermit cubic coefficients are directly constructed on the input 9D rotation representation \cite{zhou2019continuity} via backward differences.
Following \cite{kidger2020neural}, the method encodes an initial value $z_0$, which is then integrated forward in time using \textit{torchdiffeq} \cite{torchdiffeq} and Dormand-Prince 4/5 with respect to the constructed spline.
The latent representation is then decoded into the 6D rotation representation \cite{zhou2019continuity,geist2024learning}, and transformed to a rotation matrix with Gram-Schmidt orthonormalization (GSO).

\paragraph{\SO-GRU}
The GRU baseline consists of a multi-layer gated recurrent unit (GRU) RNN with three recurrent units.
The GRU is similarly applied sequentially on each 9D rotation representation, yielding a 6D prediction converted to an element in \SO{} via GSO.

\paragraph{Conservational \cite{mason2023learning}}
The conservation of energy approach of \cite{mason2023learning} is designed to learn a rotation representation directly from images.
A sequence of images is used to estimate the object's momentum, upon which they integrate the kinematics equations (the authors equivalently use the Lie–Poisson formulation) forward in time and obtain estimates of the \SO~state change under energy conservation.
They then reconstruct the predicted image sequence, and estimate predicted rotation accuracy based on image reconstruction.

We observe that the dynamics component of this pipeline is identical to how we simulate the \textit{Free Rotation} scenario in \cref{sec:simulation scenarios}.
However, for forecasting rotations from a trajectory, this approach has several limitations: 
\begin{enumerate}
\item by solving an initial value problem, they can only condition the solve with a single value
\item high sensitivity to the initial momentum estimate 
\item cannot handle non-conservative systems ($\tau_{ext} \neq 0$)
\end{enumerate}

To study this method more generally, we provide a momentum estimate and similarly integrate the equations of motion forward in time, evaluating the accuracy directly via rotational geodesic error (RGE) instead of for a downstream image reconstruction as in \cite{mason2023learning}.
In this way, we decouple the image reconstruction task and the associated image quality metrics (the authors use pixel mean-squared error) from dynamics estimation.
We emphasize that our problem setting encompasses this task; any dynamics module can be plugged directly into an object pose estimator to forecast rotation.

For a fair comparison, we additionally provide results with a ground-truth momentum estimate, observing that our method outperforms \cite{mason2023learning} in all cases but conservational, where the trajectories are parallel to the ground-truth trajectories (they are offset by the noise of the last observation).
The results can be seen in table \cref{table:conservational_GT}.

In practice, obtaining an accurate momentum estimate of an unknown object would be challenging to obtain from a single trajectory as it essentially requires estimating the velocity and mass distribution.
The authors demonstrate this on a limited set of synthetically generated and well-cropped images; however, such an approach does not generalize to unknown mass distributions, let alone objects, limiting applicability in practice.

\begin{table}[t]
\caption{Comparison of prediction errors with the conservational approach \cite{mason2023learning} using a ground truth momentum estimate across different experimental conditions. Results show mean ± standard deviation in rotational geodesic error (RGE). First and second best results are emphasized.}
\resizebox{\textwidth}{!}{
   \setlength{\tabcolsep}{7pt}
   \small
   \centering
  \begin{tabular}{l|lccccc}
    \toprule
    \makecell{Extrapolation \\ Horizon} & Method & \makecell{Freely\\ Rotating} & \makecell{Linear\\ Control} & \makecell{Velocity\\ Damping} & \makecell{Configuration-\\Dependent Torque} & \makecell{Variable\\ Dynamics} \\
    \midrule
\parbox[t]{2mm}{\multirow{2}{*}{\rotatebox[origin=c]{0}{\textit{$t=0.8s$}}}}
 & Conservational (\cite{mason2023learning}) & \textcolor{blue}{$1.15 \pm 0.00$} & \textcolor{blue}{$1.25 \pm 0.00$} & \textcolor{blue}{$1.14 \pm 0.00$} & \textcolor{blue}{$1.26 \pm 0.00$} & \textcolor{blue}{$1.43 \pm 0.00$} \\
  & SG-nCDE (Ours) & \boldmath \textbf{$0.87 \pm 0.05$} & \boldmath \textbf{$0.49 \pm 0.03$} & \boldmath \textbf{$0.42 \pm 0.02$} & \boldmath \textbf{$0.58 \pm 0.03$} & \boldmath \textbf{$0.89 \pm 0.07$} \\
 
\midrule
\parbox[t]{2mm}{\multirow{2}{*}{\rotatebox[origin=c]{0}{\textit{$t=1.2s$}}}}
 & Conservational (\cite{mason2023learning}) & \boldmath \textbf{$1.15 \pm 0.00$} & \textcolor{blue}{$2.05 \pm 0.00$} & \textcolor{blue}{$1.87 \pm 0.00$} & \textcolor{blue}{$2.05 \pm 0.00$} & \textcolor{blue}{$2.44 \pm 0.01$} \\
  & SG-nCDE (Ours) & \textcolor{blue}{$1.28 \pm 0.08$} & \boldmath \textbf{$0.64 \pm 0.03$} & \boldmath \textbf{$0.50 \pm 0.03$} & \boldmath \textbf{$0.74 \pm 0.04$} & \boldmath \textbf{$1.31 \pm 0.14$} \\
    \bottomrule
  \end{tabular}
  \vspace{-4mm}
  }
\label{table:conservational_GT}
\end{table}

\section{Additional Background}\label{sec:appendix-background}

\subsection{Moment of Inertia and Diagonalization}\label{subsec:inertia-diagonalization}
The moment of inertia (MOI) tensor $\mathbf{J}$ is defined as the integral of the mass distribution that characterizes a body's resistance to rotational acceleration about any axis. 
For a continuous mass distribution with density $\eta(\mathbf{r})$:

\begin{align}
J_{ij}
&= \begin{bmatrix}
    J_{xx} & J_{xy} & J_{xz} \\
    J_{xy} & J_{yy} & J_{yz} \\
    J_{xz} & J_{yz} & J_{zz}
\end{bmatrix}
\end{align}

with components:
\begin{align*}
J_{xx} &= \int \eta(\mathbf{r})(y^2 + z^2) \, dV \\
J_{yy} &= \int \eta(\mathbf{r})(x^2 + z^2) \, dV \\
J_{zz} &= \int \eta(\mathbf{r})(x^2 + y^2) \, dV \\
J_{xy} = J_{yx} &= -\int \eta(\mathbf{r})x y \, dV \\
J_{xz} = J_{zx} &= -\int \eta(\mathbf{r})x z \, dV \\
J_{yz} = J_{zy} &= -\int \eta(\mathbf{r})y z \, dV
\end{align*}
where $\mathbf{r} = (x,y,z)$ and $\mathbf{r}^2 = x^2 + y^2 + z^2$. 
Diagonalization yields the principal moments:
\begin{align}
\mathbf{P}^{-1}\mathbf{J}\mathbf{P} &= 
\begin{bmatrix}
    I_1 & 0 & 0 \\
    0 & I_2 & 0 \\
    0 & 0 & I_3
\end{bmatrix}
\end{align}

where $I_k$ are eigenvalues of $\mathbf{J}$ and $\mathbf{P}$ contains the corresponding eigenvectors as columns.

As any MOI tensor can be diagonalized, we consider only diagonal uniform and non-uniform MOI in our numerical simulations, with objects rotating in the canonicalized coordinate system.
We note that \cite{mason2023learning} also consider several fixed non-diagonal MOIs, but these reduce to diagonal cases \cite{bastian2024hybrid}.
Instead, we sample from distributions around non-uniform MOI tensors, which are varied during training, validation, and testing, forcing models to generalize.

\subsection{Exponential and Logarithmic Maps}\label{subsec:exp-log-maps}

\begin{definition}[Exponential Map on SO(3)]
\label{def:exp_map}
The exponential map $\expmap: \so \to \SO$ is a surjective mapping from the Lie algebra $\so$ to the Lie group $\SO$ defined as:
\begin{equation}
\expmap(\boldsymbol{\xi}) = \sum_{n=0}^{\infty} \frac{1}{n!}\boldsymbol{\xi}^n = \mathbf{I} + \boldsymbol{\xi} + \frac{1}{2!}\boldsymbol{\xi}^2 + \frac{1}{3!}\boldsymbol{\xi}^3 + \cdots
\end{equation}
where $\boldsymbol{\xi} \in \so$ is a skew-symmetric matrix. 

Geometrically, if $\boldsymbol{\xi} = \hatop{\mathbf{v}}$ for some $\mathbf{v} \in \R^3$ with $\|\mathbf{v}\| = \theta$, then $\expmap(\boldsymbol{\xi})$ represents a rotation by angle $\theta$ about the axis $\mathbf{v}/\|\mathbf{v}\|$.
\end{definition}

\begin{definition}[Logarithmic Map on SO(3)]
\label{def:log_map}
The logarithmic map $\logmap: \SO \to \so$ is the local inverse of the exponential map, retrieving the corresponding Lie algebra element from a rotation matrix:
\begin{equation}
\logmap(\mathbf{R}) = \boldsymbol{\xi} \in \so
\end{equation}
such that $\expmap(\boldsymbol{\xi}) = \mathbf{R}$. This mapping is well-defined for all $\mathbf{R} \in \SO$ where the rotation angle $\theta$ satisfies $0 \leq \theta < \pi$.
\end{definition}

The logarithmic map $\logmap$ is not uniquely defined for rotations of angle $\theta = \pi$, as rotations by $\pi$ radians about antipodal axes $\mathbf{n}$ and $-\mathbf{n}$ yield identical rotation matrices, yielding a singularity.

\subsection{SO(3) Savitzky-Golay Filtering \cite{jongeneel2022geometric}}\label{subsec:savitzky-golay}
\label{sec:sg_filter_details}

The Savitzky-Golay filter on SO(3) solves a least-squares problem to estimate polynomial coefficients that best fit the noisy rotational data in the Lie algebra, as described in Theorem 10.

The design matrix $\mathbf{A} \in \mathbb{R}^{3(2n + 1) \times 3(p+1) }$ takes the form of a Vandermonde matrix with time-shifted polynomial coefficients.
It is expanded with the Kroeneker product $\otimes$ and $\mathbf{I}_3$, the identity matrix in $\mathbb{R}^3$ such that $\mathbf{A} = \hat{A} \otimes \mathbf{I}_3$.
 $\hat{A} \in \mathbb{R}^{(2n + 1) \times (p + 1)}$ is defined as:

\[
\hat{A} = \begin{bmatrix}
1 & (t_{-n} - t_k)  & \cdots & \frac{1}{p} (t_{-n} - t_k)^p \\
1 & (t_{-n+1} - t_k) & \cdots & \frac{1}{p} (t_{-n+1} - t_k)^p \\
\vdots & \vdots & \vdots & \vdots \\
1 & (t_n - t_k) & \cdots & \frac{1}{p} (t_n - t_k)^p
\end{bmatrix}
\]

$\bb$ is constructed by rotational differences in the lie algebra and has the form:
\[
\bb = \begin{bmatrix}
\logmap(\bxtilde_{k-n}\bxtilde_k^{-1})^\vee \\
\vdots \\
\logmap(\bxtilde_{k+n}\bxtilde_k^{-1})^\vee
\end{bmatrix}
\]

$(\cdot)^\vee: \so \to \R^3$ is the inverse of the hat operator that maps skew-symmetric matrices to vectors.

This formulation brings several advantages over conventional interpolation with splines. 
It filters robustly in the region of a point (with an arbitrary support window) and can be solved without iterative optimization.
Moreover, the derivatives are smooth up to the order of the polynomial, which we leverage in the next section to demonstrate universal approximation based on the results from \cite{morrill2022choice}.

\section{Proof of Proposition 10}\label{sec:appendix-proof}

\label{proof:prop_control_signal}
We demonstrate our modified Savitzky-Golay filter satisfies the three properties of a suitable control path for the neural CDE as described by Morrill et al. \cite{morrill2022choice}.
We first restate the proposition regarding the suitability of the control path.%

\begin{proposition*}[10]
\label{prop:control_signal}
Let $\phit$ be the control path constructed in Def. (7), with polynomial coefficients $\brho$ based on solving the optimization problem in Thm. (8). 
If the maximum rotational difference $\delta_n = \max_k \|\logmap(\bxtilde_k \bxtilde_0^{-1})\|$ within the window is bounded, then:
\begin{enumerate}
    \item $\phit$ is analytic and twice differentiable,
    \item The derivatives $\varphi'(t)$ and $\varphi''(t)$ are bounded,
    \item $\phit$ uniquely minimizes the problem in Thm. (8).
\end{enumerate}\end{proposition*}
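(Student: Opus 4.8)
The plan is to verify the three claimed properties in order, treating the Savitzky-Golay control path $\phit = \expmap(p(t-\tk;\brhok))\bxtildek$ as a composition of elementary maps whose regularity is well understood. The whole argument hinges on a single quantitative fact, which I would establish first as a lemma: the polynomial coefficients $\brhok = (\bA^\top\bW\bA)^{-1}\bA^\top\bW\bb$ are bounded in terms of $\delta_n$. Since $\bb$ is built entirely from the stacked vectors $\logmap(\bxtilde_{k+m}\bxtilde_k^{-1})^\vee$, the triangle inequality on the Lie algebra (together with the bound $\|\logmap(\bxtilde_k\bxtilde_0^{-1})\|\le\delta_n$ within the window, which forces all pairwise rotational differences to be bounded — say by $2\delta_n$, well below $\pi$ by the sampling-rate assumption) gives $\|\bb\|\le C_1\delta_n$ for an explicit constant depending only on the window size $2n+1$. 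Because $\bA$ is a fixed Vandermonde-type matrix (depending only on the time offsets $t_{k+m}-\tk$, which are deterministic) and $\bW$ is a fixed positive-definite weight matrix, the operator norm of $(\bA^\top\bW\bA)^{-1}\bA^\top\bW$ is a finite constant $C_2$, so $\|\brhok\|\le C_1 C_2\,\delta_n$. I would note that $\bA^\top\bW\bA$ is invertible precisely because $\bW\succ0$ and $\bA$ has full column rank — this is where the non-degeneracy of the time sampling enters, and it simultaneously gives claim (3).

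For claim (1), $p(s;\brhok)=\brho_0+\brho_1 s+\tfrac12\brho_2 s^2$ is a matrix-valued polynomial in $s$, hence analytic; the matrix exponential $\expmap$ is entire (its power series converges everywhere, cf.\ Def.\ of the exponential map in App.\ C.2); right-multiplication by the constant matrix $\bxtildek$ is linear. A composition of analytic maps is analytic, so $\phit$ is analytic on all of $[t_0,t_n]$ (and in fact beyond it, which is what makes it usable for extrapolation), and in particular $C^\infty$, so certainly twice differentiable. I would write $\varphi'(t)$ and $\varphi''(t)$ via the chain rule, e.g.\ $\varphi'(t) = \big(\tfrac{d}{dt}\expmap(p(t-\tk;\brhok))\big)\bxtildek$, keeping the expressions symbolic rather than expanding the Magnus-type derivative of the matrix exponential.

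For claim (2), I would restrict attention to the compact interval $[t_0,t_n]$ (the window), on which $t-\tk$ ranges over a bounded set, say $|t-\tk|\le T$. Then $\|p(t-\tk;\brhok)\|\le \|\brho_0\|+T\|\brho_1\|+\tfrac12 T^2\|\brho_2\|\le C_3\delta_n$ by the lemma, and likewise its $s$-derivatives $\|\partial_s p\|,\|\partial_s^2 p\|$ are bounded by constant multiples of $\delta_n$. Using $\|\expmap(\bxi)\|_{op}=1$ for $\bxi\in\so$ (rotations are isometries) and the standard bound on the derivative of the matrix exponential — $\|D\expmap(\bxi)[\bH]\|\le e^{\|\bxi\|}\|\bH\|$, or the sharper $\le\|\bH\|$ available on $\so$ — one gets $\|\varphi'(t)\|\le C_4\delta_n$ and, differentiating once more and collecting the product-rule terms, $\|\varphi''(t)\|\le C_5(\delta_n+\delta_n^2)$, all finite since $\delta_n$ is assumed bounded. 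Finally, claim (3): the objective in Thm.\ (8), $\|\bA\brho-\bb\|^2$ (or its weighted version $\|\bW^{1/2}(\bA\brho-\bb)\|^2$), is a quadratic form with Hessian $2\bA^\top\bW\bA\succ0$, hence strictly convex, so its unique minimizer is the normal-equation solution $\brhok$, and $\phit$ built from it is the unique minimizing path.

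The main obstacle I anticipate is \textbf{claim (2)}: getting a clean, honest bound on $\varphi''(t)$ requires carefully handling the derivative of the matrix exponential along a time-varying path (the $\expmap$ derivative does not commute with its argument in general), so one must either invoke the integral/Magnus formula for $\tfrac{d}{dt}\expmap(A(t))$ or exploit the special structure of $\so$ where cleaner bounds hold. Everything else — analyticity, strict convexity, invertibility of $\bA^\top\bW\bA$ — is essentially bookkeeping once the coefficient bound $\|\brhok\|\lesssim\delta_n$ is in hand.
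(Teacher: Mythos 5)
Your proposal is correct and follows essentially the same route as the paper's own proof: analyticity via composition of the polynomial with the (entire) exponential map, boundedness of $\varphi'$ and $\varphi''$ via the least-squares coefficient bound $\|\brho\| \leq \|(\bA^\top\bA)^{-1}\|\,\|\bA^\top\bb\|$ with $\bA$ of full column rank and $\bb$ bounded by the Lie-algebra residuals, and uniqueness from strict convexity of the quadratic objective. The only differences are cosmetic: you front-load the coefficient bound as an explicit lemma with constants in $\delta_n$ and use generic matrix-exponential derivative bounds, where the paper instead invokes the closed-form angular velocity and acceleration expressions of Jongeneel et al.\ and treats the learnable weight matrix $\bW$ as a bounded extension at the end.
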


\begin{proof}

\paragraph{Smoothness} 
We begin by demonstrating sufficient smoothness of the map  $\phit$. 
We observe that $\bxtildek$ is constant, and the term  $\expmap\left( \bp\left( t - \tk; \brhok \right) \right)$ is a composition of a polynomial and the exponential map.  

We note that the logarithmic map $\logmap: \SO \to \so$ is not uniquely defined for rotations of angle $\pi$.
Specifically, for $\| \xi \| = \pi$ the axis of rotation becomes non-unique, as rotations by  $\pi$  about  $\mathbf{n}$  and  $-\mathbf{n}$ yield the same rotation matrix. 
This requires constraints or assumptions on the magnitude of rotations within any given filter window to ensure smoothness and injectivity of the maps.

Since the exponential map $\expmap: \so \to \SO$ is smooth and analytic on $\so$ \cite{rohan2013exponential}, and the polynomial $\bp\left( t - \tk ; \brhok \right) \in C^{\infty}$  by definition, we have that  $\phit \in C^{\infty}$  provided that  $\| \bp( t - \tk ; \brhok ) \| < \pi$  for all  $t$.

In practice, the magnitude of rotational differences encountered in the filtering process is small, especially when dealing with high-frequency sampling and smooth motion trajectories. 
Therefore, the norm  $\| \bp( t - \tk ; \brhok ) \| < \pi$ holds in most cases. 
However, to rigorously ensure the smoothness and differentiability of $\phit$, we impose a constraint that  $\| \bp( t - \tk ; \brhok ) \| < \pi$  for all  $t$  within the filter window as the exponential map is injective on this domain. 
This can be achieved by appropriately selecting the filter window size and ensuring the sampling rate is sufficiently high with respect to the angular velocity.

Next, we consider smoothness of the derivatives $\frac{d}{dt}\phit$ and $\frac{d^2}{dt^2}\phit$ of $\phit$.
We observe that they coincide with the angular velocity and acceleration, which are defined by \cite{jongeneel2022geometric} as:
\begin{align*}
\widehat{\boldsymbol{\omega}}(t) = \ &D\expmap\left( \boldsymbol{\varphi}(t)^\wedge \right) \dot{\boldsymbol{\varphi}}(t), \\
\quad \widehat{\dot{\boldsymbol{\omega}}}(t) = \ &D^2\expmap\left( \boldsymbol{\varphi}(t)^\wedge \right)[\dot{\boldsymbol{\varphi}}(t), \dot{\boldsymbol{\varphi}}(t)] + \\
\ &D\expmap\left( \boldsymbol{\varphi}(t)^\wedge \right) \ddot{\boldsymbol{\varphi}}(t)
\end{align*}

where $D^kf(\cdot)$ denotes the $k$-th order directional derivative with respect to $f$. 
These are, again, compositions of analytic functions and the polynomial derivatives, which are bounded and continuous and, therefore, continuously differentiable.
We refer to \cite{jongeneel2022geometric} for a complete derivation of the derivatives.

\paragraph{Boundedness}
To show the polynomial coefficients $\brho$ are bounded, we observe that they are calculated via least squares with $\bA$ and $\bb$ defined as in \cref{sec:sg_filter_details}. 
We can see that $\hat{A}$ has full column rank as long as $2n + 1 \geq p + 1$. Therefore the rank of $\bA$ can be obtained via: rank($\bA$) = rank($\hat{A}$) $\cdot$ rank($\bI_3$).
Since both these matrices have full column rank, so does $\bA$, and therefore, the matrix $\bA^T\bA$ is symmetric and positive definite. 
The inverse $(\bA^T\bA)^{-1}$ therefore exists and is bounded.

The data vector \(\bb\) is constructed from the residuals \(\boldsymbol{\delta}_m\), which are bounded by \(\|\boldsymbol{\delta}_m\| \leq \theta_{\max} < \pi\). The entries of \(\bA^\top \bb\) are sums of the form:
\[
[\bA^\top \bb]_i = \sum_{m=k-n}^{k+n} \dfrac{(t_{m} - t_k)^i}{i!} \boldsymbol{\delta}_m,
\]
which are bounded as \(\boldsymbol{\delta}_k\), and the time intervals are bounded.

Combining the above, the polynomial coefficients \(\boldsymbol{\rho}\) satisfy:
\[
\|\boldsymbol{\rho}\| \leq \left\| (\bA^\top \bA)^{-1} \right\| \left\| \bA^\top \bb \right\|
\]

Finally, we note that as long as our learned weight matrix $\bW$ is bounded, this also holds for the weighted adaptation.
$\bW \in \mathbb{R}^{3(2n - 1) \times 3(2n - 1)}$ is a (similarly expanded with $\otimes \bI_3$) diagonal matrix with a weight coinciding to each point in the trajectory window. Then, the bound can be adapted to:
\[
\|\boldsymbol{\rho}\| \leq \left\| (\bA^\top \bW \bA)^{-1} \right\| \left\| \bA^\top \bW \bb \right\|
\]

In practice, we find that we do not need to impose any constraints on $\bW$ and that the network learns to construct the regression weights that are well-behaved.

\paragraph{Uniqueness}
\cite{morrill2022choice} additionally requires the control signal $X$ to be unique. For regular sampling, this follows as cubic splines pass directly through the control points.

In our case, the polynomial $\bp\left( t - \tk ; \brhok \right)$ does not pass through the control points. 
However, we observe that because the matrix $\bA$ has full column rank, the reduced optimization problem in Thm. (8) is strictly convex, and thus the least squares problem attains a unique solution.
In the case of irregular sampling, however, this does not hold \cite[B.2.1]{morrill2022choice}.
As a simple counterexample, consider that a point placed on the estimated trajectory $\phit$ would not alter the solution due to the strict convexity of the cost function in Thm (8).
In practice, we find that the method also behaves well for irregularly sampled measurements.
\end{proof}

\end{document}